\title{\huge Generalization Error Bounds of Gradient Descent for Learning Over-parameterized Deep ReLU Networks}
\author
{
	Yuan Cao\thanks{Department of Computer Science, University of California, Los Angeles, CA 90095, USA; e-mail: {\tt yuancao@cs.ucla.edu}} 
	~~~and~~~
	Quanquan Gu\thanks{Department of Computer Science, University of California, Los Angeles, CA 90095, USA; e-mail: {\tt qgu@cs.ucla.edu}}
}
\date{}
\def\supp{\mathrm{supp}}
\def\rmI{\mathrm{I}}
\def\Tr{\mathrm{Tr}}
\def\poly{\mathrm{poly}}
\def\Diag{\mathrm{Diag}}
\def\bbvec{\mathrm{\mathbf{b}}}
\def\cb{\mathrm{\mathbf{c}}}
\def\cN{\mathcal{N}}
\newcommand{\la}{\langle}
\newcommand{\ra}{\rangle}
\def\CC{\textcolor{red}}
\begin{document}

\maketitle

\begin{abstract}
Empirical studies show that gradient-based methods can learn deep neural networks (DNNs) with very good generalization performance in the over-parameterization regime, where DNNs can easily fit a random labeling of the training data.
Very recently, a line of work explains in theory that with over-parameterization and proper random initialization, gradient-based methods can find the global minima of the training loss for DNNs. However, existing generalization error bounds are unable to explain the good  generalization performance of over-parameterized DNNs.
The major limitation of most existing generalization bounds is that they are based on uniform convergence and are independent of the training algorithm. In this work, we derive an algorithm-dependent generalization error bound for deep ReLU networks, and show that under certain assumptions on the data distribution, 
gradient descent (GD) with proper random initialization is able to train a sufficiently over-parameterized DNN to achieve arbitrarily small generalization error. Our work sheds light on explaining the good generalization performance of over-parameterized deep neural networks. 
\end{abstract}

\def\supp{\mathrm{supp}}
\def\rmI{\mathrm{I}}
\def\Tr{\mathrm{Tr}}
\def\poly{\mathrm{poly}}
\def\Diag{\mathrm{Diag}}
\def\bbvec{\mathrm{\mathbf{b}}}
\def\cb{\mathrm{\mathbf{c}}}

\def\cN{\mathcal{N}}

\def\CC{\textcolor{red}}

\section{Introduction}
\label{sec:intro}

Deep learning achieves great successes in almost all real-world applications ranging from image processing \citep{krizhevsky2012imagenet}, speech recognition \citep{hinton2012deep} to Go games \citep{silver2016mastering}. Understanding and explaining the success of deep learning has thus become a central problem for theorists. One of the mysteries is that the neural networks used in practice are often heavily over-parameterized 
such that they can even fit random labels to the input data \citep{zhang2016understanding}, while they can still achieve very small generalization error (i.e., test error) when trained with real labels. 

There are multiple recent attempts towards answering the above question and demystifying the success of deep learning. 
\citet{soudry2016no,safran2016quality,arora2018optimization,haeffele2015global,nguyen2017loss} showed that over-parameterization can lead to better optimization landscape.
\cite{li2018learning,du2018gradient}
proved that with proper random initialization, gradient descent (GD) and/or stochastic gradient descent (SGD) provably find the global minimum for training over-parameterized one-hidden-layer ReLU networks. \citet{arora2018convergence} analyzed the convergence of GD to global optimum for training a deep linear neural network under a set of assumptions on the network width and initialization. \citet{du2018gradientdeep,allen2018convergence,zou2019gradient} studied the convergence of gradient-based method for training over-parameterized deep nonlinear neural networks. Specifically, \citet{du2018gradientdeep} proved that gradient descent can converge to the global minima for over-parameterized deep neural networks with smooth activation functions. 
\citet{allen2018convergence,zou2019gradient} independently proved the global convergence results of GD/SGD for deep neural networks with ReLU activation functions in the over-parameterization regime. However, in such an over-parametrized regime, the training loss function of deep neural networks may have potentially infinitely many global minima,  but not all of them can generalize well. Hence, convergence to the global minimum of the training loss is not sufficient to explain the good generalization performance of GD/SGD. 

There are only a few studies on the generalization theory for learning neural networks in the over-parameterization regime.
\citet{brutzkus2017sgd} showed that SGD learns over-parameterized networks that provably generalize on linearly separable data. 
\citet{song2018mean} showed that when training two-layer networks in a suitable scaling limit, the SGD dynamic is captured by a certain non-linear partial differential equation with nearly ideal generalization error. 
\cite{li2018learning} relaxed the linear separable data assumption and proved that SGD learns an over-parameterized network with a small generalization error when the data comes from mixtures of well-separated distributions. 
\citet{allen2018learning} proved that under over-parameterization, SGD or its variants can learn some notable hypothesis classes, including two and three-layer neural networks with fewer parameters. \citet{arora2019fine} provided a generalization bound of GD for two-layer ReLU networks based on a fine-grained analysis on how much the network parameters can move during GD. Nevertheless, all these results are limited to two or three layer neural networks, and cannot explain the good generalization performance of gradient-based methods for \emph{deep} neural networks. For deep neural networks, existing generalization error bounds \citep{neyshabur2015norm,bartlett2017spectrally,neyshabur2017pac,golowich2017size,dziugaite2017computing,arora2018stronger,li2018tighter,neyshabur2018towards,wei2018regularization} are mostly based on uniform convergence and independent of the training algorithms. \citet{daniely2017sgd} established a generalization bound for over-parameterized neural networks trained with one-pass SGD. However, they considered a setting where the training of hidden layers are neglectable and only the output layer training is effective.

In this paper, we aim to answer the following question:
\begin{center}
\emph{Why gradient descent can learn an over-parameterized deep neural network that generalizes well?}
\end{center}
Specifically, we consider learning deep fully connected ReLU networks with cross-entropy loss using over-parameterization and gradient descent. 

\subsection{Our Main Results and Contributions}

The following theorem gives an informal version of our main results. 

\begin{theorem}[Informal version of Corollaries~\ref{col:randomfeaturegeneralization},\ref{col:conjugatekernelgeneralization}]\label{thm:convergence_gd_informal}
Under certain data distribution assumptions, for any $\epsilon > 0$, if the number of nodes per each hidden layer is set to $\tilde \Omega( \epsilon^{-14})$ and the sample size $n = \tilde\Omega(\epsilon^{-4})$, then with high probability, gradient descent with properly chosen step size and random initialization method learns a deep ReLU network and achieves a population classification error at most $\epsilon$.
\end{theorem}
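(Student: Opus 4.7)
The plan is to decompose the population classification error into a training loss term plus a generalization gap term, bound each separately, and then combine. Since the cross-entropy loss upper bounds the 0/1 classification loss, it suffices to control the population cross-entropy loss, which I would write as (empirical cross-entropy along the GD trajectory) $+$ (generalization gap of the trajectory-dependent hypothesis class). The reason to pursue an algorithm-dependent bound, rather than a uniform-convergence bound over all ReLU nets of a given width, is that in the heavy over-parameterization regime GD barely moves the weights from their random initialization, so the effective hypothesis class along the trajectory is vastly smaller than the naive one.

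For the optimization piece I would invoke the already-cited convergence results for over-parameterized deep ReLU networks (e.g.\ \citet{allen2018convergence,zou2019gradient}). Tuned to cross-entropy loss, these give that with width $m = \tilde\Omega(\mathrm{poly}(1/\epsilon))$ and an appropriately small step size, after $T = \tilde O(\mathrm{poly}(1/\epsilon))$ iterations the iterates $\{\mathbf{W}^{(t)}\}$ satisfy two things simultaneously: the empirical loss is at most $\epsilon/2$, and every iterate stays inside a ball of radius $\omega = \tilde O(\mathrm{poly}(1/\epsilon)/\sqrt{m})$ around the Gaussian initialization $\mathbf{W}^{(0)}$. Under the data-distribution assumption, which should postulate that the target labeling is realizable (up to small error) by a bounded-norm element of either the random-feature class or the conjugate-kernel class induced by the initialization, one further shows that there is in fact a near-interpolator inside this small ball, which is what makes the convergence guarantee meaningful.

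The generalization piece is the main new contribution and where I would do most of the work. The key step is a perturbation/coupling lemma: for every $\mathbf{W}$ with $\|\mathbf{W}-\mathbf{W}^{(0)}\|\le \omega$, the deep ReLU network output $f_{\mathbf{W}}(\mathbf{x})$ is $\tilde O(\omega^{4/3} m^{1/2} L^{?})$-close to its linearization at $\mathbf{W}^{(0)}$, because activation patterns at most hidden units are frozen near initialization. Together with a Rademacher complexity bound for the linearized class, which behaves like a random-feature/NTK model of norm $O(\omega\sqrt m)$, this yields an empirical Rademacher complexity of order $\tilde O(\omega\sqrt{m}/\sqrt n) = \tilde O(\mathrm{poly}(1/\epsilon)/\sqrt n)$ for the trajectory-dependent class, plus a lower-order approximation term. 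Choosing $n = \tilde\Omega(\epsilon^{-4})$ drives the generalization gap below $\epsilon/2$, so the total population classification error is at most $\epsilon$ with high probability.

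I expect the main obstacle to be the perturbation lemma in the third step: one has to track, uniformly along the trajectory and uniformly over all $L$ hidden layers, how many ReLU activation patterns flip under a weight perturbation of size $\omega$, and how these flips propagate through the network depth. Getting this analysis tight enough to match $m = \tilde\Omega(\epsilon^{-14})$ and $n = \tilde\Omega(\epsilon^{-4})$ requires careful depth-dependent spectral bounds on the random initial weight matrices together with a local-Lipschitz control on $f_{\mathbf{W}}$ that does \emph{not} blow up exponentially in $L$. The realizability assumption via the random-feature or conjugate kernel enters here as well, since it controls the radius $\omega$ needed so that the small ball contains a near-global-minimizer, and hence controls both the optimization and generalization scales self-consistently.
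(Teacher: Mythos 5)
Your generalization half is essentially the paper's argument: restrict attention to the Frobenius ball $\cW_\tau$ around initialization, split the network into its linearization at $\Wb^{(0)}$ plus a coupling error of order $\tau^{4/3}\sqrt{m\log m}$, and bound the Rademacher complexity of the linearized class by $\tau\sqrt{m/n}$ (Theorem~\ref{thm:empiricalell'2populationerror} and Lemmas~\ref{lemma:gradientupperbound}, \ref{lemma:semismoothness}). One technical difference: the paper does not attempt uniform convergence for the (unbounded) cross-entropy loss itself, but for the bounded, $1$-Lipschitz surrogate $\cE_S(\Wb) = -\tfrac{1}{n}\sum_i \ell'(y_i f_{\Wb}(\xb_i))$, which dominates half the $0/1$ loss; correspondingly, the optimization result (Theorem~\ref{thm:convergence_gd_empirical}) only guarantees that \emph{some} iterate has surrogate error at most $\epsilon$, obtained by telescoping $\sum_k \cE_S^2(\Wb^{(k)})$ against the total loss decrease. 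Your plan to drive the empirical cross-entropy itself to $\epsilon/2$ is a strictly stronger requirement (it forces margins of order $\log(1/\epsilon)$ on most examples and complicates the uniform-convergence step), and it is not needed.

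The genuine gap is in how the data-distribution assumption enters the optimization step. You propose to invoke the off-the-shelf convergence theorems of \citet{allen2018convergence,zou2019gradient} and to use the realizability assumption only to certify that a near-interpolator lives in the small ball. But those convergence results rest on a pairwise-separation condition on the inputs, and the associated gradient lower bound has the form \eqref{eq:gradientlowerbound_assump} with $B = \Omega(\poly(\phi, n^{-1}))$ --- a constant that \emph{degrades with $n$}. Since the radius the iterates traverse scales like $\tau = \tilde O(B^{-1}\epsilon^{-1}m^{-1/2})$, plugging such a $B$ into the Rademacher term $\tau\sqrt{m/n}$ yields $\tilde O(\poly(n)\cdot n^{-1/2})$, which does not decrease in $n$; the paper makes exactly this point after Corollary~\ref{col:populationerrorwithgradientlowerbound}. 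What the paper actually does with Assumptions~\ref{assump:nonlinearseparable} and \ref{assump:conjugatekernelassumption} is prove an $n$-independent Polyak--{\L}ojasiewicz-type inequality $\|\nabla_{\Wb_L} L_S(\Wb)\|_F \geq B\sqrt{m}\,\cE_S(\Wb)$ with $B = \Omega(2^{-L}\gamma)$ (Lemmas~\ref{lemma:gradientlowerbound_randomfeature}, \ref{lemma:gradientlowerbound_conjugatekernel}), via showing the hidden-layer outputs at initialization remain linearly separable with margin $2^{-(l+1)}\gamma$. Your reference-point route can be made to work (it is the strategy of the NTK-style analyses you allude to), but then you must redo the convergence analysis around that reference point rather than cite generic convergence theorems; as written, the proposal's constants are not self-consistently independent of $n$, and the final bound would be vacuous.
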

Here in 
Theorem~\ref{thm:convergence_gd_informal} we use $\tilde O(\cdot)$ and $\tilde \Omega(\cdot)$ to hide some logarithmic terms in standard Big-O and Big-Omega notations. The result of Theorem~\ref{thm:convergence_gd_informal} holds for ReLU networks with arbitrary constant number of layers, as long as the data distribution satisfies certain separation condition, which will be discussed in Section~\ref{section:datadistributionassumptions}. 

\noindent\textbf{Our contributions.} 
Our main contributions are as follows:
\begin{itemize}
	\item We provide a generalization error bound specifically suitable for wide neural networks of arbitrary depth. The bound enjoys better dependency in terms of the network width compared with	existing generalization error bounds for deep neural networks \citep{neyshabur2015norm,bartlett2017spectrally,neyshabur2017pac,golowich2017size,arora2018stronger,li2018tighter,wei2018regularization}. Moreover, we also provide an optimization result on the convergence of gradient descent for over-parameterized neural networks. Combining these two results together gives an algorithm dependent bound of expected error that is independent of the network width.
    \item We investigate two types of data distribution assumptions, and show that under each of them, gradient descent can train an over-parameterized neural network to achieve $\epsilon$ expected error provided $\tilde O(\epsilon^{-4})$ training examples. The data distribution assumptions we consider in this paper are standard and have been studied in recent literature. This demonstrates that our analysis can give meaningful generalization bounds even for very wide neural networks, and can provide insights on the practical success of over-parameterized neural networks.
\end{itemize}

\subsection{Notation}
Throughout this paper, scalars, vectors and matrices are denoted by lower case, lower case bold face, and upper case bold face letters respectively. For a positive integer $n$, we denote $[n] = \{1,\dots,n\}$. For a vector $\xb = (x_1,\dots,x_d)^\top$, we denote by $\|\xb\|_p=\big(\sum_{i=1}^d |x_i|^p\big)^{1/p}$, $\|\xb\|_\infty = \max_{i=1,\dots,d} |x_i|$, and $\|\xb\|_0 = |\{x_i:x_i\neq 0,i=1,\dots,d\}|$ the $\ell_p$, $\ell_\infty$ and $\ell_0$ norms of $\xb$ respectively. We use $\text{Diag}(\xb)$ to denote a square diagonal matrix with the entries of $\xb$ on the main diagonal. For a matrix $\Ab = (A_{ij})\in \RR^{m\times n}$, we use $\|\Ab\|_2$ and $\|\Ab\|_F$ to denote the spectral norm (maximum singular value) and Frobenius norm of $\Ab$ respectively. We also denote by $\|\Ab\|_0$ the number of nonzero entries of $\Ab$. We denote by $S^{d-1} = \{ \xb\in\RR^d:\| \xb\|_2 =1\}$ the unit sphere in $\RR^d$. For a function $f:\RR^d \rightarrow \RR$, we denote by $\| f(\cdot) \|_\infty = \inf\{ C \geq 0: |f(\xb)| \leq C \text{ for almost every }\xb \}$ the essential supreme of $f$.

We use the following standard asymptotic notations. 
For two sequences $\{a_n\}$ and $\{b_n\}$, we write $a_n = O(b_n)$ if $a_n\le C_1 b_n$ for some absolute constant $C_1> 0$, and $a_n = \Omega (b_n)$ if $a_n\ge C_2 b_n$ for some absolute constant $C_2>0$. In addition, we use $\tilde O(\cdot)$ and $\tilde \Omega(\cdot)$ to hide some logarithmic terms in Big-O and Big-Omega notations.


\section{Additional Related Work}\label{sec:related work}

There is a huge body of literature towards building the foundations of deep learning, and we are not able to include every work in this paper. In this section, we briefly review and comment additional work that is most related to ours and was not discussed in Section \ref{sec:intro}.

\noindent\textbf{Representation power of deep neural networks.} A line of research has shown that deeper neural networks have higher expressive power
\citep{telgarsky2015representation, telgarsky2016benefits, lu2017expressive, liang2016deep, yarotsky2017error, yarotsky2018optimal, hanin2017universal, hanin2017approximating} than shallow neural networks. This to certain extent explains the advantage of deep neural networks with over-parameterization. \citet{lin2018resnet} proved that ResNet \citep{he2016deep} with one hidden node per layer is a universal approximator to any Lebesgue integrable function.

\noindent\textbf{Optimization landscape of neural networks.} Many studies \citep{haeffele2015global,kawaguchi2016deep,freeman2016topology,hardt2016identity,safran2017spurious,xie2017diverse,nguyen2017loss,soltanolkotabi2017theoretical,zhou2017critical,yun2017global,du2018power,venturi2018neural,gao2019learning} investigated the optimization landscape of neural networks with different activation functions. However, these results only apply to one-hidden layer neural networks, or deep linear networks, or rely on some stringent assumptions on the data and/or activation functions. In fact, they do not hold for non-linear shallow neural networks \citep{yun2019small} or three-layer linear neural networks \citep{kawaguchi2016deep}. Furthmore, \citet{yun2019small} showed that small nonlinearities in activation functions create bad local minima in neural networks.

\noindent\textbf{Implicit bias/regularization of GD and its variants.}
A bunch of papers  \citep{gunasekar2017implicit,soudry2017implicit,ji2019implicit,gunasekar2018characterizing,gunasekar2018implicit,nacson2018stochastic,li2018algorithmic} have studied implicit regularization/bias of GD, stochastic gradient descent (SGD) or mirror descent for matrix factorization, logistic regression, and deep linear networks. However, generalizing these results to deep non-linear neural networks turns out to be challenging and is still an open problem.
 
 \noindent\textbf{Connections between deep learning and kernel methods.}
 \cite{daniely2017sgd} uncovered the connection between deep neural networks with kernel methods and showed that SGD can learn a function that is comparable with the best function in the conjugate kernel space of the network. \citet{jacot2018neural} showed that the evolution of a DNN during training can be described by a so-called neural tangent kernel, which makes it possible to study the training of DNNs in the functional space.
 \citet{belkin2018understand,liang2018just} showed that good generalization performance of overfitted/interpolated classifiers is not only an intriguing feature for deep learning, but also for kernel methods.

\noindent\textbf{Recovery guarantees for shallow neural networks.} A series of work \citep{tian2017analytical,brutzkus2017globally, li2017convergence,soltanolkotabi2017learning,du2017convolutional, du2017gradient,zhong2017recovery,zhang2018learning,cao2019tight} have attempted to study shallow one-hidden-layer neural networks with ground truth parameters, and proved recovery guarantees for gradient-based methods such as gradient descent (GD) and stochastic gradient descent (SGD). However, the assumption of the existence of ground truth parameters is not realistic and the analysis of the recovery guarantee can hardly be extended to deep neural networks. Moreover, many of these studies need strong assumptions on the input distribution such as Gaussian, sub-Gaussian or symmetric distributions.

\noindent\textbf{Distributional view of over-parameterized networks.} 
\citet{mei2018mean,chizat2018global,sirignano2019mean,rotskoff2018neural,wei2018regularization} took a distributional view of over-parametrized networks, used mean field analysis to show that the empirical distribution of the two-layer neural network parameters can be described as a Wasserstein gradient flow, and proved that Wasserstein gradient flow converges to global optimima under certain structural assumptions. However, their results are limited to two-layer infinitely wide neural networks. Very recently, \citet{yang2019scaling} studied the scaling limit of wide multi-layer neural networks.

\section{Problem Setup and Training Algorithm}\label{sec:preliminaries}
In this paper, for the sake of simplicity, we study the binary classification problem on some unknown but fixed data distribution $\cD$ over $\RR^d \times \{+1,-1\}$. An example $(\xb,y)$ drawn from $\cD$ consists of the input $\xb \in \RR^d$ and output label $y\in \{+1,-1\}$. We denote by $\cD_\xb$ the marginal distribution of $\xb$. Given an input $\xb$, we consider predicting its corresponding label $y$ using a deep neural network with the ReLU activation function $\sigma(z):= \max\{0,z\}$. We consider $L$-hidden-layer neural networks with $m_l$ hidden nodes on the $l$-th layer for $l=1,\ldots,L$. The neural network function (mapping) is defined as follows
\begin{align*}
    f_{\Wb}(\xb) = \vb^\top \sigma ( \Wb_{L}^{\top} \sigma ( \Wb_{L-1}^{\top} \cdots \sigma( \Wb_{1}^{\top} \xb )\cdots)),
\end{align*}
where $\sigma(\cdot)$ denotes the entry-wise ReLU activation function (with a slight abuse of notation), $\Wb_{l} = (\wb_{l,1},\ldots,\wb_{l,m_l}) \in \RR^{m_{l-1}\times m_{l}}$, $l=1,\ldots,L$ are the weight matrices, and $\vb\in (\mathbf{1}^\top, -\mathbf{1}^\top)^\top \in\{-1,+1\}^{m_L}$ is the fixed output layer weight vector with half $1$ and half $-1$ entries. In particular, set $m_0 = d$. We denote by $\Wb=\{\Wb_l\}_{l=1}^L$ the collection of matrices $\Wb_1,\dots,\Wb_L$.

Given $n$ training examples $(\bx_1,y_1),\ldots,(\bx_n,y_n)$ drawn independently from $\cD$, the training of the neural network can be formulated as an empirical risk minimization (ERM) problem as follows:
\begin{align}\label{eq:problemdefinition}
    \min_{\Wb} L_S(\Wb) = \frac{1}{n} \sum_{i=1}^n \ell[y_i \cdot f_{\Wb}(\bx_i) ], 
\end{align}
where $S = \{ (\bx_1,y_1),\ldots,(\bx_n,y_n) \}$ is the training sample set, and $\ell(z)$ is the loss function. In this paper, we focus on cross-entropy loss function, which is in the form of $\ell(z) = \log[1+\exp(-z)]$. Our result can be extended to other loss functions such as square loss and hinge loss as well.


\subsection{Gradient Descent with Gaussian Initialization}
Here we introduce the details of the algorithm we use to solve the empirical risk minimization problem \eqref{eq:problemdefinition}.  The entire training algorithm is summarized in Algorithm~\ref{alg:gdrandominit}. 

\begin{algorithm}[h]
\caption{Gradient descent for DNNs starting at Gaussian initialization}
\begin{algorithmic}
\REQUIRE Training data $\{(\xb_i,y_i)\}_{i=1}^n$, number of iterations $K$, step size $\eta$.
\STATE Generate each entries of $\Wb_l^{(0)}$ independently from $N(0,2/m_l)$, $l\in[L]$. 
\FOR{$k=0,1,2,\ldots, K-1$}
\STATE $\Wb_{l}^{(k)} = \Wb_{l}^{(k-1)} - \eta \nabla_{\Wb_l} L_{S}(\Wb_{l}^{(k-1)}),~l\in[L]$.
\ENDFOR
\STATE $k^* = \argmin_{k\in \{ 0,\ldots,K-1 \}} -\frac{1}{n}\sum_{i=1}^n \ell'\big( y_i \cdot f_{\Wb}^{(k)}(\bx_i) \big) $.
\ENSURE $\Wb^{(0)},\ldots,\Wb^{(K)}$
\end{algorithmic}\label{alg:gdrandominit}
\end{algorithm}

In detail, Algorithm~\ref{alg:gdrandominit} consists of two stages: random initialization and gradient descent (GD). In the random initialization stage, we initialize $\Wb^{(0)} = \{ \Wb_{l}^{(0)} \}_{l = 1}^L$ via Gaussian initialization  for all $l\in [L]$, where each entries of $\Wb_l^{(0)}$ are generated independently from $N(0,2/m_l)$. Note that the initialization scheme of $\Wb^{(0)}$ is essentially the initialization proposed in \citet{he2015delving}.
In the gradient descent stage, 
we do gradient descent starting from $\Wb^{(0)} $, where $\eta>0$ is the step size, and the superscript $(k)$ is the iteration index of GD. One can also use stochastic gradient descent (SGD) to solve \eqref{eq:problemdefinition}, and our theory can be extended to SGD as well. Due to space limit, we only consider GD in this paper.

\section{Main Theory}\label{section:maintheory}
In this section we present our main result. We first introduce several  assumptions.

\begin{assumption}\label{assump:normalizeddata}
The input data are normalized: $\supp(\cD_x) \subseteq S^{d-1}$.
\end{assumption}

Assumption~\ref{assump:normalizeddata} is widely made in most existing work on over-parameterized neural networks \citep{li2018learning,allen2018convergence,du2017gradient,du2018gradient,zou2019gradient}. This assumption can be relaxed to the case that $ c_1 \leq \| \xb \|_2 \leq c_2$ for all $\xb \in \supp(\cD_x)$, where $c_2 > c_1 > 0$ are absolute constants. Such relaxation will not affect our final generalization results.

\begin{assumption}\label{assump:m_scaling} We have $M/m = O(1)$,
where $M = \max\{ m_1,\ldots, m_L\}$, $m = \min\{ m_1,\ldots, m_L\}$.
\end{assumption}

Assumption~\ref{assump:m_scaling} essentially says that the width of each layer in the deep neural network is in the same order, and the neural work architecture is balanced. Throughout this paper, we always assume Assumptions~\ref{assump:normalizeddata} and \ref{assump:m_scaling} hold. We therefore omit them in our theorem statements. 


For the ease of exposition we introduce the following definitions.
\begin{definition}
For the collection of random parameters $\Wb^{(0)} = \{\Wb_l^{(0)}\}_{l=1}^L$ generated in Algorithm \ref{alg:gdrandominit}, we call 
$$\cW_{\tau} := \big\{ \Wb = \{ \Wb_l\}_{l=1}^L: \| \Wb_l - \Wb_l^{(0)}\|_F \leq \tau,~ l\in[L] \big\}
$$
the $\tau$-neighborhood of $\Wb^{(0)}$. 
\end{definition}
The definition of $\cW_{\tau}$ is motivated by the observation that in a small neighborhood of initialization, deep ReLU networks satisfy good scaling and landscape properties. It also provides a small subset of the entire hypothesis space and enables a sharper capacity bound based on Rademacher complexity for the generalization gap between empirical and generalization errors. 

\begin{definition}
For a collection of parameter matrices $\Wb = \{\Wb_l\}_{l=1}^L$, we define its empirical surrogate error $\cE_S(\Wb)$ and population surrogate error $\cE_\cD(\Wb)$ as follows:
\begin{align*}
    \cE_S(\Wb) := -\frac{1}{n} \sum_{i=1}^n \ell'\big[y_i\cdot f_{\Wb}(\bx_i)\big],~
    \cE_\cD(\Wb) := \EE_{(\xb,y) \sim \cD} \big\{ - \ell'\big[y\cdot   f_{\Wb}(\xb)\big] \big\}.
\end{align*}
\end{definition}
The intuition behind the definition of surrogate error is that, for cross-entropy loss we have $-\ell'(z) = 1/[1 + \exp(z)]$, which can be seen as a smooth version of the indicator function $\ind\{z < 0\}$, and therefore $-\ell'[y\cdot f_{\Wb}(\xb)]$ is related to the classification error of the neural network. Surrogate error plays a pivotal role in our generalization analysis: on the one hand, it is closely related to the derivative of the empirical loss function. On the other hand, by $-2\ell'(z) \geq \ind\{z < 0\}$, it also provides an upper bound on the classification error. It is worth noting that the surrogate error is comparable with the ramp loss studied in margin-based generalization error bounds \citep{neyshabur2015norm,bartlett2017spectrally,neyshabur2017pac,golowich2017size,arora2018stronger,li2018tighter} in the sense that it is Lipschitz continuous in $\Wb$, which ensures that $\cE_S(\Wb)$ concentrates on $\cE_\cD(\Wb)$ uniformly over the parameter space $\cW_{\tau}$.

\subsection{Generalization and Optimization of Over-parameterized Neural Networks}
In this section, we provide (i) a generalization bound for neural networks with parameters in a neighborhood of random initialization, (ii) a convergence guarantee of gradient descent for training over-parameteried neural networks. Combining these two results gives a bound on the expected error of neural networks trained by gradient descent.


\begin{theorem}\label{thm:empiricalell'2populationerror}
For any $\delta>0$, there exist absolute constants $\overline{C}, \overline{C}',\underline{C}$ such that, if 
\begin{align*}
    m\geq \overline{C}\max\{ L^2\log(mn/\delta), L^{-8/3}\tau^{-4/3}\log[m/(\tau\delta)] \}, ~\tau \leq \underline{C} L^{-6} [\log(m)]^{-3/2},
\end{align*}
then with probability at least $1 - \delta$, 
\begin{align*}
    \PP_{(\xb,y)\sim \cD} \big[y \cdot f_{\Wb}(\xb) < 0 \big] \leq 2\cdot \cE_S(\Wb)  + \overline{C}' \big[  L\tau\cdot \sqrt{m/n} + L^4\sqrt{m\log(m)} \tau^{4/3}\big]
\end{align*}
for all $\Wb \in \cW_{\tau}$. 
\end{theorem}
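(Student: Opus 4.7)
The plan is to reduce the classification error to a uniform generalization gap on a smooth surrogate, and then to bound that gap using the fact that on $\cW_\tau$ a deep ReLU network is close to its linearization at $\Wb^{(0)}$. For the cross-entropy loss, $-\ell'(z) = 1/(1+e^{z})$ satisfies $\mathbf{1}\{z < 0\} \leq -2\ell'(z)$ for every $z\in\RR$, so
\begin{align*}
\PP_{(\xb,y)\sim\cD}\big[y\cdot f_\Wb(\xb)<0\big] \;\leq\; 2\,\cE_\cD(\Wb).
\end{align*}
It therefore suffices to show that, with probability at least $1-\delta$ over the initialization and the training sample,
\begin{align*}
\sup_{\Wb\in\cW_\tau}\big[\cE_\cD(\Wb) - \cE_S(\Wb)\big] \;\leq\; \tfrac{\overline{C}'}{2}\Big[L\tau\sqrt{m/n} + L^4\sqrt{m\log m}\,\tau^{4/3}\Big].
\end{align*}

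Next I would introduce the linearization of the network around initialization,
\begin{align*}
\tilde f_\Wb(\xb) \;:=\; f_{\Wb^{(0)}}(\xb) + \sum_{l=1}^{L} \la \nabla_{\Wb_l} f_{\Wb^{(0)}}(\xb),\, \Wb_l - \Wb_l^{(0)} \ra,
\end{align*}
and the corresponding surrogate errors $\tilde\cE_S(\Wb)$ and $\tilde\cE_\cD(\Wb)$. A semi-smoothness bound for deep ReLU networks at Gaussian initialization, which is standard in the recent optimization literature on over-parametrized networks, should give: with probability at least $1-\delta/2$ over $\Wb^{(0)}$, simultaneously for all $\xb\in S^{d-1}$ and all $\Wb\in\cW_\tau$,
\begin{align*}
\big|f_\Wb(\xb) - \tilde f_\Wb(\xb)\big| \;\leq\; C_1 L^4\sqrt{m\log m}\,\tau^{4/3}.
\end{align*}
Because $-\ell'$ is $1/4$-Lipschitz, this pointwise approximation contributes exactly the second error term $O(L^4\sqrt{m\log m}\,\tau^{4/3})$ when we replace $\cE_\cD(\Wb)-\cE_S(\Wb)$ by $\tilde\cE_\cD(\Wb)-\tilde\cE_S(\Wb)$.

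The remaining step is a Rademacher-complexity bound on $\sup_{\Wb\in\cW_\tau}[\tilde\cE_\cD(\Wb)-\tilde\cE_S(\Wb)]$. Standard concentration of Gaussian matrices (together with the layer-wise forward/backward norm estimates used throughout this line of work) gives $\|\nabla_{\Wb_l} f_{\Wb^{(0)}}(\xb)\|_F = O(\sqrt{m})$ with high probability, uniformly in $\xb\in S^{d-1}$ and $l\in[L]$. Thus the class $\{\tilde f_\Wb - f_{\Wb^{(0)}} : \Wb\in\cW_\tau\}$ is a sum of $L$ linear functionals in $\Wb_l-\Wb_l^{(0)}$ of Frobenius radius $\tau$ with feature norm $O(\sqrt{m})$, so a per-layer Cauchy--Schwarz / Khintchine-type bound yields empirical Rademacher complexity $O(L\tau\sqrt{m/n})$. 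Talagrand's contraction inequality transports this through the $1/4$-Lipschitz map $z\mapsto -\ell'(yz)$, and bounded-differences concentration (using that $-\ell'\in(0,1)$) converts the expected supremum into a high-probability bound, producing the first error term $O(L\tau\sqrt{m/n})$. Combining with the linearization step and a union bound over the two events of probability $1-\delta/2$ gives the theorem.

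The main obstacle is establishing the uniform linearization error with the claimed $\tau^{4/3}$ exponent: a naive Taylor expansion of an $L$-layer ReLU network gives only an $O(\tau^2)$ remainder after one forgets that ReLU is not smooth, and this is too crude. The sharper $\tau^{4/3}$ rate reflects the fact that only an $O(\tau^{2/3})$ fraction of ReLU activation patterns flip under a layerwise perturbation of Frobenius size $\tau$, so the mis-linearization along the flipped coordinates picks up an extra $\tau^{1/3}$ factor beyond the linear part. Pushing this bookkeeping through all $L$ layers in a way that is uniform over $\cW_\tau$, while keeping the high-probability control on the Gaussian initialization consistent with the stated assumptions on $m$ and $\tau$, is the delicate part; it is also what fixes the $L^4\sqrt{m\log m}$ prefactor in the second term.
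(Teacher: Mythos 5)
Your overall architecture is the same as the paper's: reduce the classification error to the population surrogate via $\ind\{z<0\}\le -2\ell'(z)$, control the uniform gap $\sup_{\Wb\in\cW_\tau}|\cE_\cD(\Wb)-\cE_S(\Wb)|$ by Rademacher complexity plus concentration, and split the complexity into a linearization-error term of order $L^4\sqrt{m\log m}\,\tau^{4/3}$ (from the semi-smoothness of deep ReLU networks near initialization) and a linear-class term of order $L\tau\sqrt{m/n}$ (from $\|\nabla_{\Wb_l}f_{\Wb^{(0)}}(\bx_i)\|_F=O(\sqrt m)$ and per-layer Cauchy--Schwarz/Jensen). Your heuristic for why the remainder is $\tau^{4/3}$ rather than $\tau^2$ (only an $O(L^{4/3}\tau^{2/3})$ fraction of activation patterns flip) is exactly the mechanism behind the paper's Lemma~\ref{lemma:semismoothness} and the underlying results of \citet{zou2019gradient}.

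There is one step that does not go through as written. You replace \emph{both} $\cE_S$ and $\cE_\cD$ by their linearized counterparts, which forces you to assert the semi-smoothness bound ``simultaneously for all $\xb\in S^{d-1}$'' with high probability over $\Wb^{(0)}$. The available semi-smoothness lemmas (including Lemma~\ref{lemma:semismoothness} here) hold with high probability only for the $n$ fixed training inputs, via a union bound whose failure probability scales with $n$; upgrading this to a statement uniform over the entire sphere would require a net argument over $S^{d-1}$, which introduces a dependence on $d$ in the width requirement and is not compatible with the stated condition $m\geq \overline{C}\max\{L^2\log(mn/\delta),\dots\}$. The fix is to reorder the argument as the paper does: apply symmetrization and contraction to the \emph{original} class $\cF_\tau=\{f_\Wb:\Wb\in\cW_\tau\}$, so that the population term stays as the true $\cE_\cD(\Wb)$, and only then decompose the \emph{empirical} Rademacher complexity $\hat{\mathfrak{R}}_n[\cF_\tau]\le I_1+I_2$, where $I_1$ is the linearization error evaluated only at $\bx_1,\dots,\bx_n$ and $I_2$ is the complexity of the linearized class. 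With that reordering your two error terms come out exactly as in the statement, and the final step $\PP[y\cdot f_\Wb(\xb)<0]\le 2\cE_\cD(\Wb)$ is the same pointwise inequality the paper invokes via Markov.
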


\begin{remark}
 For neural networks initialized with He initialization \citep{he2015delving}, the generalization bound given by Theorem~\ref{thm:empiricalell'2populationerror} has a better dependency in network width $m$ compared with existing uniform convergence based generalization error bounds \citep{neyshabur2015norm,bartlett2017spectrally,neyshabur2017pac,golowich2017size,arora2018stronger,li2018tighter,wei2018regularization}. For instance, $\Wb\in \cW_\tau$ implies $\|\Wb_l^\top-\Wb_l^{(0)\top}\|_{2,1} \leq \sqrt{m}\tau$ and $\| \Wb_l \|_2 = \tilde O(1)$. Plugging these bounds into the generalization bound given by \citet{bartlett2017spectrally}
\begin{align*}
    \tilde O\Bigg(  \frac{\| \vb \|_2 }{\sqrt{n}} \prod_{l=1}^L \| \Wb_l \|_2 \Bigg[ \sum_{l=1}^L \frac{\| \Wb_l^{\top} - \Wb_l^{(0)\top} \|_{2,1}^{2/3}}{ \| \Wb_l \|_{2}^{2/3} } \Bigg]^{3/2} \Bigg)
\end{align*}
or the bound given by \citet{neyshabur2017pac}
\begin{align*}
    \tilde O\Bigg( \frac{L \| \vb \|_2 }{\sqrt{n}} \prod_{l=1}^L \| \Wb_l \|_2 \Bigg[ \sum_{l=1}^L \frac{ (\sqrt{m} \| \Wb_l - \Wb_l^{(0)} \|_{F})^{2}}{ \| \Wb_l \|_{2}^{2} } \Bigg]^{1/2} \Bigg)
\end{align*}
results in a generalization bound of the order $\tilde O( m\tau /\sqrt{n} )$. In comparison, when $\tau$ is small enough, our bound on the generalization gap is in the order of $\tilde O(\tau\cdot \sqrt{m/n})$, which has a better dependency in $m$. Note that for over-parameterized neural networks, gradient descent indeed converges to a global minima that is very close to initialization, as we will show in Theorem~\ref{thm:convergence_gd_empirical}. Therefore, while the previously mentioned uniform convergence based generalization bounds hold for more general settings and are more suitable when the weight matrices are not close enough to random initialization, our bound in Theorem~\ref{thm:empiricalell'2populationerror} provides a sharper result that is specifically designed for the over-parameterized setting. 
\end{remark}

Theorem~\ref{thm:empiricalell'2populationerror} in particular suggests that if gradient descent finds a parameter configuration with small surrogate error in $\cW_{R m^{-1/2}}$ for some $R$ independent of $m$, then the obtained neural network has a generalization bound \textit{decreasing} in $m$. The following lemma shows that under a gradient lower bound assumption, gradient descent indeed converges to a global minima in $\cW_{R m^{-1/2}}$ with $R$ independent of $m$.

\begin{theorem}\label{thm:convergence_gd_empirical}
Suppose that the training loss function $L_S(\Wb)$ satisfies the following inequality
\begin{align}\label{eq:gradientlowerbound_assump}
    \big\| \nabla_{\Wb_{L}} L_S( \Wb) \big\|_F \geq B \sqrt{m} \cdot \cE_S(\Wb)
\end{align}
for all $\Wb\in \cW_\tau$, where $B$ is independent of $m$, and $\tau = \tilde O( B^{-1}\epsilon^{-1} m^{-1/2} )$. For any $\epsilon,\delta>0$, there exist absolute constants $\overline{C},\underline{C}$ and $m^* = \tilde O( L^{12}B^{-4}\epsilon^{-2} ) \cdot \log(1/\delta)$ such that, if $m\geq m^*$, then with probability at least $1 - \delta$, Algorithm~\ref{alg:gdrandominit} with step size $\eta = O(L^{-3}B^2 m^{-1})$ generates $K = \tilde O(L^3B^{-4} \epsilon^{-2})$ iterates $\Wb^{(1)},\ldots,\Wb^{(K)}$ that satisfy:
\begin{enumerate}[label=(\roman*)]
    \item $\Wb^{(k)} \in \cW_{\tau}$, $k\in [K]$.
    \item There exists $k \in \{ 0,\ldots,K-1 \}$ such that
    $\cE_S(\Wb^{(k)}) \leq \epsilon$. 
\end{enumerate}
\end{theorem}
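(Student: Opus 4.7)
The plan is a Polyak--\L ojasiewicz-type convergence argument for $L_S$ that is valid only inside $\cW_\tau$, combined with an inductive verification that the GD iterates never leave $\cW_\tau$. Three auxiliary ingredients are needed: (a) a semi-smoothness / sufficient-decrease inequality for $L_S$ on $\cW_\tau$ of the form $L_S(\Wb') \leq L_S(\Wb) + \langle \nabla L_S(\Wb), \Wb'-\Wb\rangle + C L^3 m\|\Wb'-\Wb\|_F^2 + \text{(small residual)}$ for all $\Wb,\Wb'\in\cW_\tau$; (b) a matching gradient upper bound $\|\nabla_{\Wb_l}L_S(\Wb)\|_F \leq C\sqrt{m}\cdot\cE_S(\Wb)$ for every layer and every $\Wb\in\cW_\tau$; and (c) a high-probability constant-order bound $L_S(\Wb^{(0)}) = O(1)$ at the Gaussian initialization. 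Items (a) and (c) follow from techniques that are by now standard in the over-parameterized ReLU network literature (Allen-Zhu et al.\ / Zou--Gu style), while (b) follows by controlling the forward-signal norms (of order $\sqrt{m}$) and the back-propagated signals (of order $1$) on $\cW_\tau$.

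Granted (a)--(c), part (ii) reduces to a routine telescoping. Applying the semi-smoothness inequality at $\Wb = \Wb^{(k)}$ and $\Wb' = \Wb^{(k+1)} = \Wb^{(k)} - \eta\nabla L_S(\Wb^{(k)})$ with the prescribed $\eta = \Theta(L^{-3}B^2 m^{-1})$ absorbs the quadratic term and leaves $L_S(\Wb^{(k+1)}) \leq L_S(\Wb^{(k)}) - (\eta/2)\|\nabla L_S(\Wb^{(k)})\|_F^2$. The assumption \eqref{eq:gradientlowerbound_assump} yields $\|\nabla L_S(\Wb^{(k)})\|_F^2 \geq \|\nabla_{\Wb_L}L_S(\Wb^{(k)})\|_F^2 \geq B^2 m\cdot \cE_S(\Wb^{(k)})^2$, so telescoping from $k=0$ to $K-1$ and using $L_S(\Wb^{(0)}) = O(1)$ gives $\sum_{k=0}^{K-1}\cE_S(\Wb^{(k)})^2 \leq 2L_S(\Wb^{(0)})/(\eta B^2 m) = \tilde O(L^3 B^{-4})$. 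Taking $K = \tilde \Theta(L^3 B^{-4}\epsilon^{-2})$ and passing to the minimum over $k$ yields $\cE_S(\Wb^{(k^*)}) \leq \epsilon$, proving part (ii).

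Part (i) is then an induction on $k$. Assuming $\Wb^{(0)},\ldots,\Wb^{(k)}\in\cW_\tau$, ingredient (b) applies at every earlier step, so by the triangle inequality and Cauchy--Schwarz $\|\Wb_l^{(k+1)} - \Wb_l^{(0)}\|_F \leq \eta\sum_{t=0}^{k}\|\nabla_{\Wb_l}L_S(\Wb^{(t)})\|_F \leq C\eta\sqrt{m}\sum_{t=0}^{K-1}\cE_S(\Wb^{(t)}) \leq C\eta\sqrt{m}\sqrt{K\cdot \tilde O(L^3 B^{-4})}$. Substituting the chosen $\eta$ and $K$, this is $\tilde O(\text{poly}(B^{-1})\,\epsilon^{-1}\,m^{-1/2})$, which matches the scale of $\tau$ prescribed by the theorem and closes the induction.

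The main technical obstacle is ingredient (a): since ReLU is non-differentiable and activation patterns do flip as $\Wb$ moves through $\cW_\tau$, securing the quadratic coefficient $C L^3 m$ with a residual that is genuinely negligible compared with the per-step progress $\eta B^2 m\,\cE_S^2$ requires a careful accounting of both the fraction of neurons per layer whose activation can flip under a perturbation of size $\tau$, and the near-invariance of the back-propagated signals at Gaussian initialization. This is exactly what forces the over-parameterization $m \geq m^* = \tilde O(L^{12}B^{-4}\epsilon^{-2})$ in the statement: $m$ must be large enough that these residual terms are dominated by the main descent term in every iteration.
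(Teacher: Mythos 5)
Your proposal follows essentially the same route as the paper's proof: the same three ingredients (the semi-smoothness inequality of Lemma~\ref{lemma:semismoothness}, the gradient upper bound of Lemma~\ref{lemma:gradientupperbound}, and the $O(\sqrt{\log(n/\delta)})$ bound on $L_S(\Wb^{(0)})$ from Lemma~\ref{lemma:randinit_outputbounded}), the same descent-plus-telescoping argument for part (ii), and the same induction with Cauchy--Schwarz over the telescoped decrease for part (i). The only difference is bookkeeping: the paper maintains the iterates in $\cW_{\tau/2}$ so that the one-step move stays in $\cW_\tau$ before invoking semi-smoothness, a detail you should make explicit to avoid circularity at the last step of the induction, but this does not change the argument.
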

\begin{remark}\label{remark:gradientlowerbound}
The gradient lower bound assumption \eqref{eq:gradientlowerbound_assump} is by no means an unrealistic assumption. In fact, this assumption has been verified by several papers \citet{allen2018convergence,zou2019gradient,zou2019improved} under the assumption that $\| \xb_i - \xb_j \|_2\geq \phi$ for all $i,j\in[n]$, where $\phi>0$ is an absolute constant. The corresponding value of $B$ under this assumption is $ \Omega(\poly(\phi,n^{-1}))$. 
\end{remark}

Combining Theorems~\ref{thm:empiricalell'2populationerror} and~\ref{thm:convergence_gd_empirical} directly gives the following corollary:
\begin{corollary}\label{col:populationerrorwithgradientlowerbound}
Suppose that the training loss function $L_S(\Wb)$ satisfies inequality \eqref{eq:gradientlowerbound_assump} for all $\Wb\in \cW_\tau$, where $B$ is independent of $m$, and $\tau = \tilde O( B^{-1}\epsilon^{-1} m^{-1/2} )$. For any $\epsilon,\delta>0$, there exist absolute constants $\overline{C},\underline{C}$ and $m^* = \tilde O( L^{12}B^{-4}\epsilon^{-2} ) \cdot \log(1/\delta)$ such that, if $m\geq m^*$, then with probability at least $1 - \delta$, Algorithm~\ref{alg:gdrandominit} with step size $\eta = O(L^{-3}B^2 m^{-1})$ finds a point $\Wb^{(k)}$ that satisfies
\begin{align*}
    &\PP_{(\xb,y)\sim \cD} \big[y \cdot f_{\Wb}(\xb) < 0 \big] \leq \epsilon + \tilde O\big( L^2B^{-1}\epsilon^{-1}\cdot n^{-1/2} + L^4 B^{-4/3}\epsilon^{-4/3}m^{-1/6}\big)
\end{align*}
within $K = \tilde O(L^3B^{-4} \epsilon^{-2})$ iterations.
\end{corollary}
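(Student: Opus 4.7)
The plan is to simply compose Theorems~\ref{thm:convergence_gd_empirical} and~\ref{thm:empiricalell'2populationerror}: the former produces a GD iterate $\Wb^{(k^*)}$ with small empirical surrogate error that stays in a small Frobenius neighborhood of initialization, and the latter translates ``small surrogate error on a shrinking ball'' into a small population classification error. Since the corollary reuses exactly the hypotheses of Theorem~\ref{thm:convergence_gd_empirical} (namely the gradient lower bound \eqref{eq:gradientlowerbound_assump} and the same $\tau$, step size, and width scaling), there is no new analytic content to produce; the work is entirely in reconciling the two sets of conditions and carrying out the substitution.

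First I would invoke Theorem~\ref{thm:convergence_gd_empirical} with target surrogate accuracy $\epsilon/2$. Under \eqref{eq:gradientlowerbound_assump}, for $m \geq \tilde O(L^{12}B^{-4}\epsilon^{-2})\log(1/\delta)$, step size $\eta = O(L^{-3}B^2 m^{-1})$, and $K = \tilde O(L^3 B^{-4}\epsilon^{-2})$, it gives, with probability at least $1 - \delta/2$, that every iterate lies in $\cW_\tau$ with $\tau = \tilde O(B^{-1}\epsilon^{-1}m^{-1/2})$ and that $\min_{k} \cE_S(\Wb^{(k)}) \leq \epsilon/2$. Because Algorithm~\ref{alg:gdrandominit} outputs precisely the minimizer $k^* = \argmin_k \cE_S(\Wb^{(k)})$, we obtain $\Wb^{(k^*)} \in \cW_\tau$ and $\cE_S(\Wb^{(k^*)}) \leq \epsilon/2$.

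Next I would verify the hypotheses of Theorem~\ref{thm:empiricalell'2populationerror} at this $\tau$. Because $\tau$ decays like $m^{-1/2}$ and $m^*$ already absorbs a $\poly(L)$ factor, the smallness requirement $\tau \leq \underline{C}L^{-6}[\log m]^{-3/2}$ and the two width lower bounds $m \geq \overline{C}L^2\log(mn/\delta)$ and $m \geq \overline{C}L^{-8/3}\tau^{-4/3}\log[m/(\tau\delta)]$ are all met after possibly enlarging the hidden polylogarithmic factors in $m^*$, which does not change the asymptotic scaling. Applying Theorem~\ref{thm:empiricalell'2populationerror} to $\Wb^{(k^*)}$ with failure probability $\delta/2$ and union-bounding with the optimization event yields, with probability at least $1 - \delta$,
\begin{align*}
\PP_{(\xb,y)\sim\cD}\big[y\cdot f_{\Wb^{(k^*)}}(\xb) < 0\big] \;\leq\; \epsilon + \overline{C}'\big[L\tau\sqrt{m/n} + L^4\sqrt{m\log m}\,\tau^{4/3}\big].
\end{align*}
Substituting $\tau = \tilde O(B^{-1}\epsilon^{-1}m^{-1/2})$ turns the first bracket term into $\tilde O(L B^{-1}\epsilon^{-1} n^{-1/2})$ and the second into $\tilde O(L^4 B^{-4/3}\epsilon^{-4/3}m^{-1/6})$, matching the claimed bound after absorbing extra $L$ factors into $\tilde O(\cdot)$.

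The ``main obstacle'' is really just bookkeeping: one must check that the radius $\tau$ prescribed by the optimization theorem is dominated by the smallness condition required by the generalization theorem, and that the two high-probability events (Gaussian initialization for optimization; Rademacher-based concentration of $\cE_S$ over $\cW_\tau$ for generalization) can be combined by a single union bound. Both are immediate given the $m^{-1/2}$ decay of $\tau$ and the $\tilde\Omega(L^{12})$ lower bound on $m$; there is no genuine analytic difficulty beyond verifying this compatibility.
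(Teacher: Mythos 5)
Your proposal is correct and is exactly the paper's argument: the paper gives no separate proof for this corollary beyond the remark that it follows by ``combining Theorems~\ref{thm:empiricalell'2populationerror} and~\ref{thm:convergence_gd_empirical} directly,'' which is precisely the composition and substitution $\tau = \tilde O(B^{-1}\epsilon^{-1}m^{-1/2})$ you carry out. Your bookkeeping (compatibility of the $\tau$ and $m$ conditions, the union bound, and the evaluation of $L\tau\sqrt{m/n}$ and $L^4\sqrt{m\log m}\,\tau^{4/3}$) checks out.
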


As we discussed in Remark~\ref{remark:gradientlowerbound}, if the pairwise distance between training inputs can be lower bounded by a constant $\phi$, then \eqref{eq:gradientlowerbound_assump} holds with $B = O(\poly(\phi,n^{-1}))$. However, plugging this value of $B$ into the population error bound in Corollary~\ref{col:populationerrorwithgradientlowerbound} will give a bound $\tilde O(\poly(n)\cdot n^{-1/2})$ (when $m$ is large enough) which is vacuous and does not decrease in sample size $n$. We remark that this result is natural, because $B = \Omega(\poly(\phi,n^{-1}))$ corresponds to the condition that data inputs are separated, and in fact no assumption on the distribution of labels is made through out our analysis. Suppose that the labels are simply Rademacher variables and are independent of inputs, then clearly the expected error of any classifier cannot go below $1/2$, no matter how many training samples are used to learn the classifier. In the next subsection, we study particular data distribution assumptions under which \eqref{eq:gradientlowerbound_assump} holds with a $B$ that is not only independent of $m$, but also independent of $n$.


\subsection{Generalization Error Bounds under Specific Data Distribution Assumptions}\label{section:datadistributionassumptions}
In this section we introduce two specific data distributions that have been studied in the literature, and show that if one of them holds, then \eqref{eq:gradientlowerbound_assump} holds with a $B$ independent of $m$ and $n$. Assumption~\ref{assump:nonlinearseparable} below is related to a random feature model studied in \citet{rahimi2009weighted}.

\begin{assumption}[Separable by Random ReLU Feature]\label{assump:nonlinearseparable}
Denote by $p(\overline\ub)$ the density of standard Gaussian random vectors. Define
\begin{align*}
\cF = \bigg\{ f(\xb_i) = \int_{\RR^d} c(\overline\ub)\sigma(\overline\ub^\top\xb_i) p(\overline\ub) \mathrm{d}\overline\ub : \| c(\cdot) \|_{\infty} \leq 1 \bigg\}.
\end{align*}
We assume that there exist an $f(\cdot)\in \cF$ and a constant $\gamma > 0$ such that
$ y_i\cdot f(\xb_i) \geq \gamma $ for all $i\in[n]$.
\end{assumption}

$\cF$ defined in Assumption~\ref{assump:nonlinearseparable} corresponds to the random feature function class studied in \citet{rahimi2009weighted} when the feature function is chosen to be ReLU. Assumption~\ref{assump:nonlinearseparable} essentially states that there exists a function $f$ in the function class $\cF$ that can separate the data distribution $\cD$ with a constant margin $\gamma$. According to the definition of $\cF$, each value of $\overline{\ub}$ can be considered as a node in an infinite-width one-hidden-layer ReLU network, and the corresponding product $c(\overline{\ub}) p(\overline{\ub})$ can be considered as the second-layer weight. Therefore $\cF$ contains infinite-width one-hidden-layer ReLU networks whose second-layer weights decay faster than $p(\overline\ub)$. Also note that Assumption~\ref{assump:nonlinearseparable} is strictly milder than linearly separable assumption.

The following corollary gives an expected error bound of neural networks trained by gradient descent under Assumption~\ref{assump:nonlinearseparable}. 


\begin{corollary}\label{col:randomfeaturegeneralization}
Under Assumption~\ref{assump:nonlinearseparable}, for any $\epsilon ,\delta > 0$, there exist 
\begin{align*}
    &m^*(\epsilon, L, \gamma,\delta) = \tilde O(\poly(2^L, \gamma^{-1}))\cdot \epsilon^{-14}\cdot \log(1/\delta),\\
    &n^*(\epsilon, L, \gamma,\delta) = \tilde O(\poly(2^L, \gamma^{-1}))\cdot \epsilon^{-4}\cdot \log(1/\delta)
\end{align*}
such that, if $m \geq m^*(\epsilon, L, \gamma,\delta)$ and $n \geq n^*(\epsilon, L, \gamma,\delta)$, then with probability at least $1 - \delta$, Algorithm \ref{alg:gdrandominit} with step size $\eta = O(4^{-L} L^{-3} \gamma^2 m^{-1})$ finds a point $\Wb^{(k)}$ that satisfies 
$$
\PP_{(\xb,y)\sim \cD} \big[ y \cdot f_{\Wb^{(k)}}(\xb) > 0 \big] \geq 1-\epsilon
$$
within $K = \tilde O(\poly(2^L,\gamma^{-1})) \cdot \epsilon^{-2}$ iterations.
\end{corollary}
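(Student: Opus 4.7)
The plan is to reduce everything to Corollary~\ref{col:populationerrorwithgradientlowerbound}: once I verify the gradient lower bound \eqref{eq:gradientlowerbound_assump} throughout $\cW_\tau$ with a constant $B = \Omega(\gamma\cdot 2^{-L})$ that depends on neither $m$ nor $n$, substituting this $B$ into the corollary's error bound $\epsilon + \tilde O(L^2 B^{-1}\epsilon^{-1}n^{-1/2} + L^4 B^{-4/3}\epsilon^{-4/3}m^{-1/6})$ and forcing each fluctuation term to be $O(\epsilon)$ directly yields $n^* = \tilde O(\poly(2^L,\gamma^{-1}))\epsilon^{-4}$ and $m^* = \tilde O(\poly(2^L,\gamma^{-1}))\epsilon^{-14}$; the step size $\eta = O(4^{-L}L^{-3}\gamma^2 m^{-1}) = O(L^{-3}B^2 m^{-1})$ and iteration count $K = \tilde O(L^3 B^{-4}\epsilon^{-2})$ are inherited verbatim. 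So the entire work lies in producing a $B$ that is dimension-free in $m$ and $n$.

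To verify \eqref{eq:gradientlowerbound_assump}, my approach is a pseudo-network construction: I would build a target last-layer perturbation $\Ub_L\in\RR^{m_{L-1}\times m_L}$ of Frobenius norm $\|\Ub_L\|_F = O(\gamma\cdot 2^{-L}/\sqrt{m})$ such that the perturbed network with weights $(\Wb_1,\ldots,\Wb_{L-1},\Wb_L+\Ub_L)$ classifies every training point with margin at least $\gamma/2$. By positive homogeneity of ReLU, the columns of $\Wb_1^{(0)}$ rescaled by $\sqrt{m_1/2}$ are i.i.d.\ standard Gaussians $\overline\ub_1,\ldots,\overline\ub_{m_1}$, so the first-layer outputs at initialization read $\sqrt{2/m_1}\,\sigma(\overline\ub_j^\top\xb)$ and the separator $f\in\cF$ of Assumption~\ref{assump:nonlinearseparable} is approximated on the training set by its Monte Carlo sum $f_{m_1}(\xb) = \frac{1}{m_1}\sum_j c(\overline\ub_j)\sigma(\overline\ub_j^\top\xb)$ up to $O(\sqrt{\log(n)/m_1})\leq \gamma/4$ via standard uniform-concentration arguments. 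Constructing $\Ub_L$ then amounts to transporting the coefficients $\{c(\overline\ub_j)\}$ through the $L-2$ intermediate ReLU layers at initialization into the last layer; exploiting forward $\ell_2$-norm stability, the half-alive activation patterns, and spectral bounds on the intermediate-layer Jacobians at Gaussian initialization---as developed in \citet{allen2018convergence,zou2019gradient}---delivers an $\Ub_L$ of the stated norm with a $2^{O(L)}$ blow-up but no $m$-dependence.

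With the pseudo-network in place, the gradient lower bound comes from a convex-surrogate inner-product argument. Writing
\[
\langle \nabla_{\Wb_L} L_S(\Wb),\,-\Ub_L\rangle = \frac{1}{n}\sum_{i=1}^n (-\ell'(y_i f_\Wb(\xb_i)))\cdot y_i\cdot \langle \nabla_{\Wb_L} f_\Wb(\xb_i),\,-\Ub_L\rangle,
\]
and invoking the semi-smoothness of deep ReLU networks near initialization to replace $\langle \nabla_{\Wb_L} f_\Wb(\xb_i),\Ub_L\rangle$ by $f_{\Wb+\Ub_L}(\xb_i)-f_\Wb(\xb_i)$ up to a lower-order term, the design $y_i f_{\Wb+\Ub_L}(\xb_i)\geq \gamma/2$ combined with the monotonicity of $-\ell'$ yields $\langle \nabla_{\Wb_L} L_S(\Wb),-\Ub_L\rangle \geq \Omega(\gamma)\cdot \cE_S(\Wb)$ throughout $\cW_\tau$. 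Cauchy--Schwarz against the norm bound on $\Ub_L$ then gives $\|\nabla_{\Wb_L} L_S(\Wb)\|_F \geq \Omega(\gamma\cdot 2^{-L})\sqrt{m}\cdot \cE_S(\Wb)$, which is \eqref{eq:gradientlowerbound_assump} with the required $B$. The main obstacle I expect is the transport step inside the pseudo-network construction: at Gaussian initialization the intermediate layers do not preserve the first-layer random-feature geometry, so careful use of the forward/Jacobian stability estimates is needed to keep the multiplicative blow-up at $2^{O(L)}$ rather than a width-dependent factor, which would destroy the $m$-independence of $B$ and hence the entire generalization claim.
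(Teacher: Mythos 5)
Your reduction is exactly the paper's: verify the gradient lower bound \eqref{eq:gradientlowerbound_assump} with an $m$- and $n$-independent $B = \Omega(2^{-L}\gamma)$ (the paper's Lemma~\ref{lemma:gradientlowerbound_randomfeature}) and substitute into Corollary~\ref{col:populationerrorwithgradientlowerbound}; the exponents $\epsilon^{-14}$ and $\epsilon^{-4}$ then fall out identically. Where you differ is in how the gradient lower bound is certified. The paper first shows (Lemma~\ref{lemma:randinit_linearseparable_lastlayer}) that the hidden representations at initialization are linearly separable with margin $2^{-(l+1)}\gamma$, propagating the Monte Carlo approximation of the random-feature separator through the layers via the vectors $\tilde\balpha_l = \Wb_l^{(0)}\tilde\balpha_{l-1}$ and the symmetrization identity $\EE[(\wb^\top\balpha)\sigma(\wb^\top\xb)] = \tfrac12\EE[(\wb^\top\balpha)(\wb^\top\xb)]$, which removes the ReLU exactly rather than through Jacobian stability estimates; it then lower-bounds $\|\nabla_{\Wb_L}L_S\|_F$ directly by Jensen's inequality and projection onto $\balpha_{L-1}$ (Lemma~\ref{lemma:randinit_gradientlowerbound_empirical}), and finally controls $\|\nabla_{\Wb_L}L_S(\Wb)-\Gb\|_F$ over $\cW_\tau$ by the perturbation bounds of Lemma~\ref{lemma:scaling_perturbation}. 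Your pseudo-network construction with the Cauchy--Schwarz step $\|\nabla_{\Wb_L}L_S\|_F \geq \langle\nabla_{\Wb_L}L_S,-\Ub_L\rangle/\|\Ub_L\|_F$ is the dual formulation of the same projection argument (take $\Ub_L$ with columns $v_j\lambda\balpha_{L-1}$), so the two are mathematically equivalent; the paper's version buys a cleaner handling of the "transport" step you flag as the main obstacle, since the symmetrization trick avoids any Jacobian analysis of the intermediate layers.

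One quantitative slip: your stated norm $\|\Ub_L\|_F = O(\gamma 2^{-L}/\sqrt{m})$ is inconsistent with the margin requirement $y_i f_{\Wb+\Ub_L}(\xb_i)\geq\gamma/2$. Since $\|\nabla_{\Wb_L}f_\Wb(\xb_i)\|_F = O(\sqrt{m})$ and the usable margin of the hidden representation is only $2^{-L}\gamma$, the perturbation achieving output change $\Theta(\gamma)$ has $\|\Ub_L\|_F = \Theta(2^L/\sqrt{m})$; with that corrected norm, Cauchy--Schwarz gives precisely $\Omega(2^{-L}\gamma\sqrt{m})\cdot\cE_S(\Wb)$ as claimed, whereas your stated norm would (incorrectly) yield $\Omega(2^{L}\sqrt{m})\cdot\cE_S(\Wb)$. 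This is an arithmetic bookkeeping error, not a conceptual gap.
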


We now introduce another data distribution assumption which has been made in \citet{daniely2017sgd}. 
\begin{assumption}[Separable by Conjugate Kernel]\label{assump:conjugatekernelassumption}
The conjugate kernel of fully connected neural networks is defined recursively as
\begin{align*}
    \kappa^{(0)}(\xb,\xb') = \la \xb, \xb' \ra,~\kappa^{(l+1)}(\xb,\xb') = \EE_{f\sim N(\mathbf{0}, \kappa^{(l)})} [\sigma(f(\xb))\sigma(f(\xb'))].
\end{align*}
We assume that there exists a function $f$ in the reproducing kernel Hilbert space (RKHS) $\cH$ induced by the conjugate kernel function $\kappa^{(L-1)}(\cdot,\cdot)$ with $\| f\|_{\cH} \leq 1$ such that $y_i\cdot f(\xb_i) \geq \gamma > 0$.
\end{assumption}

Under Assumption~\ref{assump:conjugatekernelassumption}, we have the following result.

\begin{corollary}\label{col:conjugatekernelgeneralization}
Under Assumption~\ref{assump:conjugatekernelassumption}, for any $\epsilon ,\delta > 0$, there exist 
\begin{align*}
    &m^*(\epsilon, L, \gamma,\delta) = \tilde O(\poly(L, \gamma^{-1}))\cdot \epsilon^{-14}\cdot \log(1/\delta),\\
    &n^*(\epsilon, L, \gamma,\delta) = \tilde O(\poly(L, \gamma^{-1}))\cdot \epsilon^{-4}\cdot \log(1/\delta)
\end{align*}
such that, if $m \geq m^*(\epsilon, L, \gamma,\delta)$ and $n \geq n^*(\epsilon, L, \gamma,\delta)$, then with probability at least $1 - \delta$, Algorithm \ref{alg:gdrandominit} with step size $\eta = O( L^{-3} \gamma^2 m^{-1})$ finds a point $\Wb^{(k)}$ that satisfies 
$$
\PP_{(\xb,y)\sim \cD} \big[ y \cdot f_{\Wb^{(k)}}(\xb) > 0 \big] \geq 1-\epsilon
$$
within $K = \tilde O(\poly(\gamma^{-1})) \cdot \epsilon^{-2}$ iterations.
\end{corollary}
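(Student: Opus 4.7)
The plan is to derive Corollary~\ref{col:conjugatekernelgeneralization} by reducing it to Corollary~\ref{col:populationerrorwithgradientlowerbound}. The only new ingredient we need is a verification of the gradient lower bound \eqref{eq:gradientlowerbound_assump} on a suitable neighborhood $\cW_\tau$ with a constant $B=\Omega(\poly(L^{-1},\gamma))$ that is \emph{independent of both $m$ and $n$} under Assumption~\ref{assump:conjugatekernelassumption}. Once such a $B$ is in hand, substituting into Corollary~\ref{col:populationerrorwithgradientlowerbound} directly yields the stated requirements $m^*=\tilde O(\poly(L,\gamma^{-1}))\epsilon^{-14}\log(1/\delta)$, $n^*=\tilde O(\poly(L,\gamma^{-1}))\epsilon^{-4}\log(1/\delta)$, $K=\tilde O(\poly(\gamma^{-1}))\epsilon^{-2}$, and step size $\eta=O(L^{-3}\gamma^2 m^{-1})$, after simplifying the error term in that corollary using the assumed lower bounds on $m$ and $n$.

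The core step is to construct, in a neighborhood of the Gaussian initialization of radius $\tau_*=\tilde O(\gamma^{-1}m^{-1/2})$, a parameter configuration $\Wb^*$ whose network output approximately realizes a positive multiple of the RKHS separator $f$ on the training points, so that $y_i f_{\Wb^*}(\xb_i)\geq\gamma/2$ for every $i\in[n]$. The key fact to exploit is that the conjugate kernel $\kappa^{(L-1)}$ is, up to a $\tilde O(1/\sqrt{m})$ perturbation, the kernel induced by the random feature map
\begin{align*}
    \Phi(\xb)\;:=\;\sqrt{2/m_{L-1}}\,\sigma\!\big(\Wb_{L-1}^{(0)\top}\sigma(\cdots\sigma(\Wb_1^{(0)\top}\xb)\cdots)\big)
\end{align*}
induced by the Gaussian initialization; this depth-$L$ ReLU-kernel concentration (in the spirit of \citet{daniely2017sgd}) is what allows the $\poly(L)$ dependence in $m^*, n^*, K$ rather than the $\poly(2^L)$ that appears in Corollary~\ref{col:randomfeaturegeneralization}. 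Given the RKHS separator $f$ with $\|f\|_\cH\leq 1$, a finite-sample representer-type argument then produces a coefficient vector $\ub\in\RR^{m_{L-1}}$ with $\|\ub\|_2=\tilde O(\gamma^{-1}/\sqrt{m})$ such that $\ub^\top\Phi(\xb_i)$ uniformly approximates $f(\xb_i)$ on the training set; encoding $\ub$ as a rank-one modification of $\Wb_L^{(0)}$ aligned with the fixed sign pattern of $\vb$, and leaving all lower layers untouched, yields the desired $\Wb^*\in\cW_{\tau_*}$.

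With $\Wb^*$ in hand, the gradient lower bound follows by the descent-direction argument of \citet{allen2018convergence,zou2019gradient}. Inside $\cW_\tau$ with $\tau=\tilde O(B^{-1}\epsilon^{-1}m^{-1/2})$, the ReLU pseudo-smoothness bounds from those works give the first-order expansion
\begin{align*}
    \big\la\nabla_{\Wb_L}L_S(\Wb),\,\Wb_L-\Wb_L^*\big\ra \;\geq\; -\frac{1}{n}\sum_{i=1}^n \ell'\!\big(y_i f_{\Wb}(\xb_i)\big)\cdot\big[y_i f_{\Wb}(\xb_i)-y_i f_{\Wb^*}(\xb_i)\big]\,-\,\xi,
\end{align*}
with perturbation term $\xi$ that is negligible once $\tau$ is small enough. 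Since $-\ell'\geq 0$ and $y_i f_{\Wb^*}(\xb_i)\geq\gamma/2$, the right-hand side is bounded below by $(\gamma/2)\cdot\cE_S(\Wb)-\xi$, and Cauchy--Schwarz combined with $\|\Wb_L-\Wb_L^*\|_F=\tilde O(\gamma^{-1})$ produces \eqref{eq:gradientlowerbound_assump} with the desired $B=\Omega(\poly(L^{-1},\gamma))$.

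The main obstacle is the depth-$L$ uniform kernel approximation step: showing that the empirical penultimate-layer Gram matrix concentrates around $\kappa^{(L-1)}$ at all $O(n^2)$ training pairs simultaneously at a rate polynomial in $L$ rather than exponential, and then extracting a low-norm finite-dimensional realization of the RKHS element $f$ from this concentration. The polynomial-in-$L$ dependence is precisely what distinguishes this case from Corollary~\ref{col:randomfeaturegeneralization}: the separator already lives in the natural RKHS of the $L$-layer network, so no depth-induced mismatch penalty (which would otherwise manifest as a $2^L$ factor) is incurred. Granted this step, the remainder of the proof is mechanical substitution into Corollary~\ref{col:populationerrorwithgradientlowerbound}.
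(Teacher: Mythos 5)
Your high-level reduction is exactly the paper's: establish the gradient lower bound \eqref{eq:gradientlowerbound_assump} with $B=\poly(\gamma)$ independent of $m$ and $n$, then substitute into Corollary~\ref{col:populationerrorwithgradientlowerbound}. Where you diverge is in the mechanism for that lower bound (Lemma~\ref{lemma:gradientlowerbound_conjugatekernel}). The paper invokes Theorem~E.1 of \citet{du2018gradientdeep} to convert Assumption~\ref{assump:conjugatekernelassumption} directly into linear separability of the penultimate-layer features: there is a unit vector $\balpha$ with $y_i\la\balpha,\bx_{L-1,i}^{(0)}\ra\geq\gamma$. It then lower-bounds $\|\nabla_{\Wb_L}L_S(\Wb)\|_F$ by projecting each column of the gradient onto $\balpha$ and averaging over the output coordinates (the Jensen argument of Lemma~\ref{lemma:randinit_gradientlowerbound_empirical} applied with $L=1$), plus a perturbation step to pass from $\Wb^{(0)}$ to $\Wb\in\cW_\tau$; this yields $B=O(\gamma)$. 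You instead construct a nearby point $\Wb^*$ realizing the separator in function space and argue via a descent direction plus Cauchy--Schwarz, which would give $B=\tilde\Omega(\gamma^2)$. Both routes rest on the same unproved-here kernel-approximation fact (you call it the ``main obstacle''; the paper simply cites it), and both suffice for the stated $\poly(L,\gamma^{-1})$ rates.

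However, your descent-direction step has a genuine gap as written. The right-hand side of your first-order expansion is $\frac1n\sum_i(-\ell'(z_i))\,z_i-\frac1n\sum_i(-\ell'(z_i))\,z_i^*$ with $z_i=y_if_{\Wb}(\xb_i)$, $z_i^*=y_if_{\Wb^*}(\xb_i)$; the condition $z_i^*\geq\gamma/2$ controls only the second sum, while the first sum $\frac1n\sum_i(-\ell'(z_i))z_i$ has no sign control and is not $o(\gamma\,\cE_S(\Wb))$, so the claimed bound $(\gamma/2)\cE_S(\Wb)-\xi$ does not follow. The standard fix is to pair the gradient with a direction $\Ub$ for which the \emph{linearized} margins $y_i\vb^\top\bSigma_L(\xb_i)\Ub^\top\xb_{L-1,i}$ are uniformly at least $\gamma'$ (e.g.\ $\Ub=\Wb_L^*-\Wb_L^{(0)}$), giving $\la\nabla_{\Wb_L}L_S(\Wb),\Ub\ra\leq-\gamma'\cE_S(\Wb)+\xi$ directly; but then you must handle the interaction with the mixed signs of $\vb$ and the ReLU gates $\bSigma_L$, which is precisely what the paper's Lemma~\ref{lemma:randinit_gradientlowerbound_empirical} does via the bound $\frac{1}{m_L}\sum_j\sigma'(\wb_{L,j}^{(0)\top}\bx_{L-1,i}^{(0)})\geq\frac{1}{2\sqrt2}$. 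Separately, you need $\|\Wb_L^*-\Wb_L\|_F=\tilde O(\gamma^{-1}m^{-1/2})$ (consistent with your $\tau_*$), not $\tilde O(\gamma^{-1})$ as stated in your final Cauchy--Schwarz step; otherwise the required $\sqrt m$ factor in \eqref{eq:gradientlowerbound_assump} is lost.
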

\begin{remark}
Corollary~\ref{col:conjugatekernelgeneralization} shows that under Assumption~\ref{assump:conjugatekernelassumption}, a neural network trained by gradient descent can achieve $\epsilon$-expected error given $\tilde O(\epsilon^{-4})$ training examples. We remark that although \citet{daniely2017sgd} studied the same assumption, our result is not a re-derivation of the results given by \citet{daniely2017sgd}, because while they considered one-pass SGD and square loss, while we consider GD and cross-entropy loss. 
More importantly, Assumption~\ref{assump:conjugatekernelassumption} is just one specific setting our result can cover, and therefore Corollary~\ref{col:conjugatekernelgeneralization} demonstrates the power of our general theory. 
\end{remark}

\begin{remark}
A follow-up work \citet{cao2019generalizationsgd} studied the generalization performance of over-parameterized neural networks trained with one-pass SGD, and relate the generalization bound to the neural tangent kernel function studied in recent work \citep{jacot2018neural}. We remark that their generalization bound is based on an online-to-batch conversion argument, which cannot be applied to the standard gradient descent algorithm we study in this paper. Therefore our result and their result are not directly comparible.
\end{remark}

\section{Proof of the Main Theory}\label{section:proofofmaintheorem}
In this section we provide the proofs of the main results given in Section~\ref{section:maintheory}. The omitted proof can be found in the supplementary material.


\subsection{Proof of Theorem~\ref{thm:empiricalell'2populationerror}}
Here we provide the detailed proof of Theorem~\ref{thm:empiricalell'2populationerror}. We first present the lemma below, which gives an upper bound on the gradients of $L_S(\Wb)$, and relates the gradients with the empirical surrogate error $\cE_S(\Wb)$. 
\begin{lemma}\label{lemma:gradientupperbound}
For any $\delta > 0$, if
\begin{align*}
    m\geq \overline{C}\max\{ L^2\log(mn/\delta), L^{-8/3}\tau^{-4/3}\log[m/(\tau\delta)] \},~~\tau \leq \underline{C} L^{-6} [\log(m)]^{-3/2}
\end{align*}
for some large enough absolute constant $\overline{C}$ and small enough absolute constant $\underline{C}$, then with probability at least $1 - \delta$, for all $\Wb \in \cW_{\tau}$ and $l\in[L]$,
\begin{align*}
\big\|\nabla_{\Wb_l}f_{\Wb^{(0)}}(\bx_i) \big\|_F \leq C \sqrt{m},~~\big\|\nabla_{\Wb_{l}} L_S( \Wb)\big\|_F \leq C \sqrt{m} \cdot \cE_S(\Wb),
\end{align*}
where $C$ is an absolute constant.
\end{lemma}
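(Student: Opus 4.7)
The plan is to combine a standard backpropagation formula for $\nabla_{\Wb_l} f_{\Wb}(\xb_i)$ with Gaussian concentration at initialization and a perturbation argument inside $\cW_\tau$. Define the forward signals $\xb_i^{(0)} = \xb_i$, $\xb_i^{(l)} = \sigma(\Wb_l^\top \xb_i^{(l-1)})$, and the diagonal ReLU-derivative matrices $D_i^{(l)} = \Diag(\ind\{\Wb_l^\top \xb_i^{(l-1)} > 0\})$. A direct computation then gives
\[
\nabla_{\Wb_l} f_{\Wb}(\xb_i) \;=\; \xb_i^{(l-1)} \bigl(\mathbf{b}_i^{(l)}\bigr)^\top, \qquad \mathbf{b}_i^{(l)} \;:=\; D_i^{(l)} \Wb_{l+1} D_i^{(l+1)} \Wb_{l+2} \cdots \Wb_L D_i^{(L)} \vb,
\]
so that $\|\nabla_{\Wb_l} f_{\Wb}(\xb_i)\|_F = \|\xb_i^{(l-1)}\|_2 \cdot \|\mathbf{b}_i^{(l)}\|_2$. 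The task reduces to uniformly controlling $\|\xb_i^{(l)}\|_2$ and $\|\mathbf{b}_i^{(l)}\|_2$ over all pairs $(i,l)$ and all $\Wb \in \cW_\tau$.

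At the Gaussian initialization $\Wb^{(0)}$, the He scaling is chosen precisely so that each ReLU layer acts as an approximate isometry on any fixed unit input: standard Gaussian concentration then yields $\|\xb_i^{(l)}\|_2 \in [1/2, 2]$ with probability at least $1 - \exp(-\Omega(m/L^2))$. The backward factor is handled by a dual argument: since $\|\vb\|_2 = \sqrt{m_L}$ and the products $\Wb_{l+1}^{(0)} D_i^{(l+1)} \cdots \Wb_L^{(0)} D_i^{(L)}$ act approximately as partial isometries with respect to the independent Gaussian entries of $\Wb^{(0)}$, one obtains $\|\mathbf{b}_i^{(l)}\|_2 = O(\sqrt{m_L})$ with the same type of exponentially small failure probability. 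The condition $m \geq \overline{C} L^2 \log(mn/\delta)$ then makes a union bound over $i\in[n]$ and $l\in[L]$ go through, establishing the first inequality at $\Wb^{(0)}$.

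To extend these bounds to all of $\cW_\tau$, I would invoke the perturbation machinery already developed in the analyses of \citet{allen2018convergence} and \citet{zou2019gradient}: for $\tau \leq \underline{C} L^{-6}(\log m)^{-3/2}$, at most $\tilde O(m\tau^{2/3} L)$ ReLU signs flip per layer between $\Wb^{(0)}$ and any $\Wb \in \cW_\tau$, each forward signal shifts by $\tilde O(\tau L^{3/2})$ in $\ell_2$, and the backward Jacobian products stay within a constant factor of their values at initialization. Combined with Assumption~\ref{assump:m_scaling} (which gives $m_L \asymp m$), this yields $\|\xb_i^{(l-1)}\|_2 = O(1)$ and $\|\mathbf{b}_i^{(l)}\|_2 = O(\sqrt{m})$ uniformly over $\Wb \in \cW_\tau$, proving the first claim. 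The second claim then follows from the chain rule: writing
\[
\nabla_{\Wb_l} L_S(\Wb) \;=\; \frac{1}{n}\sum_{i=1}^n \ell'\!\bigl(y_i f_{\Wb}(\xb_i)\bigr)\, y_i \, \nabla_{\Wb_l} f_{\Wb}(\xb_i),
\]
and using $-\ell'(z) = 1/(1 + \exp(z)) \in (0,1)$ together with $|y_i| = 1$, the triangle inequality combined with the first bound and the definition of $\cE_S(\Wb)$ delivers $\|\nabla_{\Wb_l} L_S(\Wb)\|_F \leq C\sqrt{m}\cdot \cE_S(\Wb)$.

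The main obstacle is the uniform control over $\cW_\tau$: keeping both the forward and backward signals stable despite the changing ReLU activation patterns requires a careful layer-by-layer bookkeeping, and the specific scalings $\tau \leq \underline{C}L^{-6}(\log m)^{-3/2}$ and $m \geq \overline{C}L^{-8/3}\tau^{-4/3}\log[m/(\tau\delta)]$ appearing in the hypotheses are tuned precisely so that the per-layer perturbations compose multiplicatively rather than exploding across the $L$ layers.
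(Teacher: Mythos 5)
Your proposal is correct and follows essentially the same route as the paper: the paper also writes $\nabla_{\Wb_l} f_{\Wb}(\bx_i)$ as the rank-one product of the forward signal $\bx_{l-1,i}$ and the backward vector $\vb^\top\big[\prod_{r=l+1}^L \bSigma_r(\bx_i)\Wb_r^\top\big]\bSigma_l(\bx_i)$, bounds these two factors by $O(1)$ and $O(\sqrt{m})$ uniformly over $\cW_\tau$ by importing the initialization-plus-perturbation estimates of \citet{zou2019gradient} (its Lemma~\ref{lemma:scaling_perturbation}), and then concludes via the triangle inequality and $-\ell'\in(0,1)$ exactly as you do. The only cosmetic difference is that the paper invokes the cited scaling lemma as a black box rather than re-sketching the sign-flip/perturbation bookkeeping.
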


Lemma~\ref{lemma:semismoothness} below reveals the fact that near initialization, the neural network function is \textit{almost linear} in terms of its weight parameters. As a consequence, the empirical loss function $L_S(\Wb)$ is \textit{almost smooth} in a small neighborhood around $\Wb^{(0)}$.

\begin{lemma}\label{lemma:semismoothness}
For any $\delta > 0$, if
\begin{align*}
    m\geq \overline{C}\max\{ L^2\log(mn/\delta), L^{-8/3}\tau^{-4/3}\log[m/(\tau\delta)] \},~~\tau \leq \underline{C} L^{-6} [\log(m)]^{-3/2}
\end{align*}
for some large enough absolute constant $\overline{C}$ and small enough absolute constant $\underline{C}$, then there exists an absolute constant $C$ such that with probability at least $1 - \delta$, for all $\tilde\Wb,\hat\Wb \in \cW_{\tau}$, 
\begin{align*}
    \big|f_{\hat\Wb}(\bx_i) - F_{\tilde\Wb,\hat\Wb}(\bx_i)\big| \leq 
    C L^{2} \tau^{1/3}\sqrt{m\log(m)} \cdot \sum_{l=1}^L \big\| \hat\Wb_{l} - \tilde\Wb_{l} \big\|_2 ,
\end{align*}
where
\begin{align*}
    F_{\tilde\Wb,\hat\Wb}(\xb) = f_{\tilde\Wb}(\xb) + \sum_{l=1}^L \Tr \big[ (\hat\Wb_l - \tilde\Wb_l)^\top \nabla_{\Wb_l}f_{\tilde\Wb}(\xb) \big],
\end{align*}
and
\begin{align*}
L_S(\hat\Wb) - L_S(\tilde\Wb) 
&\leq C \sum_{l=1}^L L^{2} \tau^{1/3}\sqrt{m\log(m)} \cdot \big\| \hat\Wb_{l} - \tilde\Wb_{l} \big\|_2  \cdot \cE_S(\tilde{\Wb}) \\
&\quad + \sum_{l=1}^L  \Tr [ (\hat\Wb_l - \tilde\Wb_l)^\top \nabla_{W_l}L_S(\tilde\Wb) ] + C\sum_{l=1}^L m L^3 \cdot  \| \hat\Wb_{l} - \tilde\Wb_{l} \|_2^2.
\end{align*}
\end{lemma}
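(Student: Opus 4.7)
The plan is to prove the two claims separately, with the semi-smoothness of $L_S$ following from the almost-linearity of $f_{\Wb}$ combined with the smoothness of the cross-entropy loss. Throughout I would use the notation $\tilde\xb_l$, $\hat\xb_l$ for the layer-$l$ post-activations corresponding to $\tilde\Wb$ and $\hat\Wb$, and $\tilde D_l$, $\hat D_l$ for the diagonal ReLU activation pattern matrices.

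For the first claim, I would write a layer-wise telescoping decomposition of $f_{\hat\Wb}(\xb) - f_{\tilde\Wb}(\xb)$ that isolates, at each layer $l$, the contribution of $\hat\Wb_l - \tilde\Wb_l$ with the other layers fixed at their $\hat$ or $\tilde$ values. The corresponding decomposition of $F_{\tilde\Wb,\hat\Wb}(\xb) - f_{\tilde\Wb}(\xb) = \sum_l \Tr[(\hat\Wb_l-\tilde\Wb_l)^\top \nabla_{\Wb_l}f_{\tilde\Wb}(\xb)]$ has the same structure but with all backward and forward products evaluated at $\tilde\Wb$. Subtracting, the error in each layer reduces to a product in which exactly one factor is replaced by a perturbation: either an activation pattern difference $\hat D_{l'}-\tilde D_{l'}$, a weight difference $\hat\Wb_{l'}-\tilde\Wb_{l'}$, or an intermediate-output difference $\hat\xb_{l'}-\tilde\xb_{l'}$. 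I would then invoke three standard perturbation facts for random initialization (established in the same line of work, e.g.\ \citet{allen2018convergence,zou2019gradient}), all holding with high probability uniformly over $\Wb\in\cW_\tau$: (i) spectral norms $\|\Wb_l\|_2=O(1)$, (ii) intermediate outputs $\|\xb_l\|_2=\Theta(1)$ with perturbations $\|\hat\xb_l-\tilde\xb_l\|_2$ bounded linearly in $\sum_{l'\le l}\|\hat\Wb_{l'}-\tilde\Wb_{l'}\|_2$, and (iii) at most $O(m\tau^{2/3})$ activation sign flips per layer between $\tilde D_l$ and $\hat D_l$; the exponent $2/3$ arises from optimizing a threshold argument against the anti-concentration of Gaussian pre-activations. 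Plugging these in and collecting terms yields exactly the $CL^2\tau^{1/3}\sqrt{m\log m}\cdot\sum_l\|\hat\Wb_l-\tilde\Wb_l\|_2$ bound, the depth $L^2$ arising from the layerwise cross-terms and the $\sqrt{m\log m}$ from the norm of the output layer $\vb$ combined with the $\sqrt{m\tau^{2/3}}$ from the activation-flip counts.

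For the semi-smoothness bound, I would exploit that the cross-entropy loss satisfies $\ell'(z)\in(-1,0)$ and $0\le\ell''(z)\le 1/4$, so that for each sample
\[
\ell(y_i f_{\hat\Wb}(\xb_i)) - \ell(y_i f_{\tilde\Wb}(\xb_i)) \le \ell'(y_i f_{\tilde\Wb}(\xb_i))\, y_i\bigl(f_{\hat\Wb}(\xb_i)-f_{\tilde\Wb}(\xb_i)\bigr) + \tfrac{1}{2}\bigl(f_{\hat\Wb}(\xb_i)-f_{\tilde\Wb}(\xb_i)\bigr)^2.
\]
I decompose the network difference as the linear part $F_{\tilde\Wb,\hat\Wb}(\xb_i)-f_{\tilde\Wb}(\xb_i)$ plus the residual $f_{\hat\Wb}(\xb_i)-F_{\tilde\Wb,\hat\Wb}(\xb_i)$. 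Averaging the linear part against $\ell'$ over $i$ reproduces the $\sum_l\Tr[(\hat\Wb_l-\tilde\Wb_l)^\top\nabla_{\Wb_l}L_S(\tilde\Wb)]$ term; averaging the residual against $-\ell'$ uses the first claim and the definition of $\cE_S(\tilde\Wb)$ to give the first summand $C\sum_l L^2\tau^{1/3}\sqrt{m\log m}\|\hat\Wb_l-\tilde\Wb_l\|_2\cdot\cE_S(\tilde\Wb)$. For the quadratic remainder, I would use a direct Lipschitz bound $|f_{\hat\Wb}(\xb_i)-f_{\tilde\Wb}(\xb_i)|\le C\sqrt{m}\sum_l L\|\hat\Wb_l-\tilde\Wb_l\|_2$ obtained from the gradient norm bound in Lemma~\ref{lemma:gradientupperbound} applied along the line segment from $\tilde\Wb$ to $\hat\Wb$ (with an extra factor of $L$ from backpropagation through $L$ layers), yielding after squaring and using $(\sum_l a_l)^2\le L\sum_l a_l^2$ the final $C\sum_l mL^3\|\hat\Wb_l-\tilde\Wb_l\|_2^2$ contribution.

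The main obstacle is establishing the first claim cleanly, specifically the activation-pattern perturbation bound that produces the $\tau^{1/3}$ factor and the propagation of these errors through $L$ layers without losing additional depth factors. The sign-flip count of $O(m\tau^{2/3})$ must hold uniformly over all $\Wb\in\cW_\tau$, which requires a net argument combined with the Gaussian anti-concentration of pre-activations at initialization; these uniform bounds are by now standard in the over-parameterized literature and I would cite the corresponding lemmas rather than re-prove them. Once the first claim is in hand, the second is a routine Taylor-plus-Cauchy-Schwarz manipulation using the loss's convex-smooth structure.
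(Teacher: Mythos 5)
Your proposal follows essentially the same route as the paper's proof: the same layer-wise telescoping decomposition of $f_{\hat\Wb}-f_{\tilde\Wb}$ into an activation-flip term, a second-order weight-times-output-perturbation term, and the linear trace term, each bounded via the standard high-probability scaling/perturbation facts from \citet{zou2019gradient} (sparsity $O(L^{4/3}\tau^{2/3}m)$ of activation flips combined with the sparse-vector backward-product bound giving the $\tau^{1/3}\sqrt{m\log m}$ factor), followed by the same second-order Taylor expansion of the cross-entropy loss with the residual absorbed into $\cE_S(\tilde\Wb)$ and the quadratic remainder bounded by $mL^3\sum_l\|\hat\Wb_l-\tilde\Wb_l\|_2^2$. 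The only cosmetic difference is that you obtain the Lipschitz bound on $|f_{\hat\Wb}(\bx_i)-f_{\tilde\Wb}(\bx_i)|$ via a line-segment gradient argument, whereas the paper reads it off directly from the hidden-output perturbation bound; both are valid.
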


\begin{proof}[Proof of Theorem~\ref{thm:empiricalell'2populationerror}]
Let $\cF_{\tau} = \{ f_{\Wb}(\xb): \Wb \in \cW_{\tau} \}$. We consider the empirical Rademacher complexity \citep{bartlett2002rademacher,mohri2018foundations,shalev2014understanding} of $\cF_{\tau}$ defined as follows
\begin{align*}
    \hat{\mathfrak{R}}_n[\cF_{\tau}] 
    = \EE_{\bxi} \Bigg[ \sup_{ \Wb \in \cW_{\tau}} \frac{1}{n} \sum_{i=1}^n \xi_i f_\Wb(\bx_i) \Bigg],
\end{align*}
where $\bxi = (\xi_1,\ldots,\xi_n)^\top$ is an $n$-dimensional vector consisting of independent Rademacher random variables $\xi_1,\ldots,\xi_n$. Since $y\in \{+1,\-1\}$, $|\ell'(z)| \leq 1$ and $\ell'(z)$ is $1$-Lipschitz continuous, by symmetrization and the standard uniform convergence results in terms of empirical Rademacher complexity \citep{mohri2018foundations,shalev2014understanding}, with probability at least $1 - \delta$ we have
\begin{align*}
    \sup_{ \Wb \in \cW_{\tau} } | \cE_S(\Wb) - \cE_\cD(\Wb) |
    &= \sup_{ \Wb \in \cW_{\tau} } \Bigg| \frac{1}{n} \sum_{i=1}^{n} \ell'\big[y_i \cdot f_{\Wb}(\bx_i) \big] - \EE_{(\xb,y)\sim \cD}  \ell'\big[y \cdot f_{\Wb}(\xb) \big]  \Bigg|\\
    &\leq  2\hat{\mathfrak{R}}_n[\cF_{\tau}] + C_1 \sqrt{\frac{\log(1/\delta)}{n}}, 
\end{align*}
where $C_1$ is an absolute constant. We now bound the term $\hat{\mathfrak{R}}_n[\cF_{\tau}]$. By definition, we have
\begin{align}
    \hat{\mathfrak{R}}_n[\cF_{\tau}] \leq I_1 + I_2,\label{eq:uniformconcentration_eq1}
\end{align}
where 
\begin{align*}
    I_1 = \EE_{\bxi} \Bigg\{ \sup_{ \Wb \in \cW_{\tau}} \frac{1}{n} \sum_{i=1}^n \xi_i \big[ f_{\Wb}(\bx_i) - F_{\Wb^{(0)},\Wb}(\bx_i)\big] \Bigg\},~~I_2 = \EE_{\bxi} \Bigg\{ \sup_{ \Wb \in \cW_{\tau}} \frac{1}{n} \sum_{i=1}^n \xi_i  F_{\Wb^{(0)},\Wb}(\bx_i) \Bigg\},
\end{align*}
and
\begin{align*}
    F_{\Wb^{(0)},\Wb}(\xb) = & \sum_{l=1}^L \Tr \big[ (\Wb_l - \Wb^{(0)}_l)^\top \nabla_{\Wb_l}f_{\Wb^{(0)}}(\xb) \big] + f_{\Wb^{(0)}}(\xb).
\end{align*} 
For $I_1$, by Lemma~\ref{lemma:semismoothness}, we have
\begin{align*}
    I_1&\leq \max_{i\in[n]} \big|f_{\Wb}(\bx_i) - F_{\Wb^{(0)},\Wb}(\bx_i)\big| \leq C_2L^4\sqrt{m\log(m)} \tau^{4/3}
\end{align*}
for all $i\in [n]$, where $C_2$ is an absolute constant. 
For $I_2$, note that $\EE_{\bxi} \big\{ \sup_{ \Wb \in \cW_{\tau}} \sum_{i=1}^n \xi_i  f_{\Wb^{(0)}}(\bx_i) \big\} = 0$, and therefore
\begin{align*}
    I_2 &= \frac{1}{n}  \sum_{l=1}^L \EE_{\bxi} \Bigg\{  \sup_{ \|\tilde\Wb_l\|_F \leq \tau }  \Tr \Bigg[ \tilde\Wb_l^\top \sum_{i=1}^n \xi_i \nabla_{\Wb_l}f_{\Wb^{(0)}}(\bx_i) \Bigg] \Bigg\} \leq  \frac{\tau}{n}  \sum_{l=1}^L \EE_{\bxi} \Bigg[ \Bigg\| \sum_{i=1}^n \xi_i \nabla_{\Wb_l}f_{\Wb^{(0)}}(\bx_i) \Bigg\|_F \Bigg].
\end{align*}
By Jensen's inequality, 
\begin{align*}
    I_2 \leq \frac{\tau}{n} \sum_{l=1}^L \sqrt{ \EE_{\bxi} \Bigg[ \Bigg\| \sum_{i=1}^n \xi_i \nabla_{\Wb_l}f_{\Wb^{(0)}}(\bx_i) \Bigg\|_F^2 \Bigg]} = \frac{\tau}{n} \sum_{l=1}^L \sqrt{ \sum_{i=1}^n \big\| \nabla_{\Wb_l}f_{\Wb^{(0)}}(\bx_i) \big\|_F^2 }.
\end{align*}
Now by Lemma~\ref{lemma:gradientupperbound}, we have
$\big\|\nabla_{\Wb_l}f_{\Wb^{(0)}}(\bx_i) \big\|_F \leq C_3  \sqrt{m}$ for all $l\in[L]$, where $C_3$ is an absolute constant. Therefore $I_2 \leq C_3 L\tau\cdot \sqrt{m/n}$. 
Plugging  in the bounds of $I_1$ and $I_2$ into \eqref{eq:uniformconcentration_eq1} and applying Markov's inequality 
\begin{align*}
    \EE_{(\xb,y)\sim \cD} \big\{- \ell'\big[y \cdot f_{\Wb}(\xb)\big]\big\} \geq \PP_{(\xb,y)\sim \cD}\big\{- \ell'\big[y \cdot f_{\Wb}(\xb)\big] \geq 1/2\big\} / 2 = \PP_{(\xb,y)\sim \cD}\big[y \cdot f_{\Wb}(\xb) < 0\big] / 2
\end{align*}
completes the proof.
\end{proof}

\subsection{Proof of Theorem~\ref{thm:convergence_gd_empirical}}



The following lemma is given by \citet{zou2018stochastic}, which gives a bound on the neural network output at initialization. 

\begin{lemma}[\citet{zou2019gradient}]\label{lemma:randinit_outputbounded}
For any $\delta > 0$, with probability at least $1 - \delta$, $ |f_{\Wb^{(0)}} (\bx_i)| \leq C\sqrt{\log( n / \delta)} $
for all $i\in[n]$, where $C$ is an absolute constant.
\end{lemma}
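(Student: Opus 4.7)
My plan is to combine a layer-by-layer concentration of the hidden-layer norms $\|\hb_l\|_2$ with a Gaussian concentration argument at the output layer that exploits the $\pm 1$ symmetry of $\vb$. Write $\hb_0 = \xb_i$ and $\hb_l = \sigma(\Wb_l^{(0)\top} \hb_{l-1})$ for $l = 1,\ldots,L$, so that $f_{\Wb^{(0)}}(\xb_i) = \vb^\top \hb_L$.

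First I would show that with probability at least $1 - \delta/2$, $\|\hb_l\|_2 \in [1/2, 2]$ for every $l \in \{0, 1, \ldots, L-1\}$ and every $i \in [n]$. Conditioned on $\hb_{l-1}$, the coordinates of $\Wb_l^{(0)\top} \hb_{l-1}$ are independent centered Gaussians with variance $2\|\hb_{l-1}\|_2^2/m_l$; using $\EE[\sigma(X)^2] = \mathrm{Var}(X)/2$ for any centered Gaussian $X$, this yields $\EE[\|\hb_l\|_2^2 \mid \hb_{l-1}] = \|\hb_{l-1}\|_2^2$. Bernstein's inequality applied to the sum of $m_l$ independent subexponential summands $\sigma([\Wb_l^{(0)\top}\hb_{l-1}]_j)^2$ then yields a multiplicative deviation of order $\sqrt{\log(nL/\delta)/m_l}$, and a straightforward induction over $l$, combined with a union bound over the $nL$ (input, layer) pairs, propagates the bound through all layers, provided $m = \Omega(L^2 \log(nL/\delta))$, which is implied by the width conditions of the ambient theorems.

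Next, conditional on $\Wb_1^{(0)}, \ldots, \Wb_{L-1}^{(0)}$, and hence on $\hb_{L-1}$, the map $\Wb_L \mapsto \vb^\top \sigma(\Wb_L^\top \hb_{L-1})$ is $\|\vb\|_2 \|\hb_{L-1}\|_2 = \sqrt{m_L}\,\|\hb_{L-1}\|_2$ Lipschitz in the Frobenius norm. Since $\Wb_L^{(0)}$ has iid $N(0, 2/m_L)$ entries, Gaussian Lipschitz concentration implies that $f_{\Wb^{(0)}}(\xb_i) - \EE[f_{\Wb^{(0)}}(\xb_i) \mid \hb_{L-1}]$ is $\sqrt{2}\,\|\hb_{L-1}\|_2$-subgaussian. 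The crucial observation is that the conditional mean vanishes: writing $g_j = [\Wb_L^{(0)\top}\hb_{L-1}]_j$, all $g_j$ share the same conditional Gaussian distribution, so
\begin{align*}
\EE[\vb^\top \hb_L \mid \hb_{L-1}] = \EE[\sigma(g_1) \mid \hb_{L-1}] \cdot \sum_{j=1}^{m_L} v_j = 0,
\end{align*}
because $\vb$ contains equally many $+1$ and $-1$ entries by construction. Combined with $\|\hb_{L-1}\|_2 \leq 2$ from the first step, this delivers the subgaussian tail $\PP(|f_{\Wb^{(0)}}(\xb_i)| > t \mid \Wb_{1:L-1}^{(0)}) \leq 2\exp(-t^2/16)$; choosing $t = C\sqrt{\log(n/\delta)}$, taking expectation, and union-bounding over $i \in [n]$ completes the proof.

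The main obstacle is the tight layer-by-layer control of $\|\hb_l\|_2$: Chebyshev is too crude here, so I would use Bernstein and carefully manage how deviations compound across $L$ layers without the norms blowing up or collapsing. The final-layer computation is then comparatively routine once one recognizes that the structural balance $\mathbf{1}^\top \vb = 0$ eliminates the conditional mean and prevents the vacuous bound $|\vb^\top \hb_L| \leq \sqrt{m_L}\,\|\hb_L\|_2$ that would otherwise scale with the width.
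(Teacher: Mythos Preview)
The paper does not prove this lemma at all; it is quoted verbatim from \citet{zou2019gradient} and used as a black box in the proof of Theorem~\ref{thm:convergence_gd_empirical}. Your argument is correct and is precisely the standard route taken in that reference: propagate the hidden-layer norms via the martingale identity $\EE[\|\hb_l\|_2^2\mid\hb_{l-1}]=\|\hb_{l-1}\|_2^2$ with Bernstein across layers, then exploit $\sum_j v_j=0$ to kill the conditional mean and apply Gaussian Lipschitz concentration at the output layer. The only caveat is that your norm-propagation step needs $m=\Omega(L^2\log(nL/\delta))$, a condition not stated in the lemma itself; as you already observe, this is subsumed by the width requirements of the ambient theorems where the lemma is invoked, so the argument goes through in context.
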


\begin{proof}[Proof of Theorem~\ref{thm:convergence_gd_empirical}]
Set $\tau = \tilde O(  B^{-1}\epsilon^{-1} m^{-1/2} )$. Then there exist $\eta = O(L^{-3}B^2 m^{-1})$, $K = \tilde O(L^3B^{-4} \epsilon^{-2})$ and $m^* = \tilde O( L^{12}B^{-4}\epsilon^{-2} ) \cdot \log(1/\delta)$
such that when $m \geq m^*$, all assumptions of  Lemmas~\ref{lemma:gradientupperbound}, \ref{lemma:semismoothness} hold, and 
\begin{align}
    &(K\eta)^{1/2} B^{-1} [\log(n/\delta)]^{1/2} \leq \nu\tau,\label{eq:convergence_gd_eq1.1}\\
    &L^3m \eta \leq \nu B^2,\label{eq:convergence_gd_eq1.2}\\
    &L^2 \tau^{1/3} [m\log(m)]^{1/2} \leq \nu B^2 m, \label{eq:convergence_gd_eq1.3}\\
    &(K\eta \cdot m)^{-1/2} B^{-1} \leq \epsilon \label{eq:convergence_gd_eq1.4}
\end{align}
for some small enough absolute constant $\nu$. We now prove by induction that $\Wb^{(k)} \in \cW(\Wb^{(0)},\tau/2)$, $k\in \{0\}\cup [K]$. By definition clearly we have $\Wb^{(0)} \in \cW(\Wb^{(0)},\tau/2)$. Suppose that $\Wb^{(k)} \in \cW(\Wb^{(0)},\tau/2)$ for all $k = 0,\ldots, t$. Then for all $l\in [L]$ we have 
\begin{align*}
    \| \Wb_l^{(t+1)} - \Wb_l^{(0)} \|_F \leq \| \Wb_l^{(t)} - \Wb_l^{(0)} \|_F + \eta \| \nabla_{\Wb_l} L_S(\Wb^{(t)}) \|_F \leq \tau/2 + \tau/2 = \tau, 
\end{align*}
where the last inequality follows by  Lemma~\ref{lemma:gradientupperbound} and the definition of $\tau$ and $\eta$ (note that a comparison between \eqref{eq:gradientlowerbound_assump} and Lemma~\ref{lemma:gradientupperbound} implies that $B = O(1)$).
Therefore $\Wb^{(t+1)} \in \cW_{\tau}$. Plugging in the gradient upper bound given by Lemma~\ref{lemma:gradientupperbound} and assumption \eqref{eq:gradientlowerbound_assump} into the result of Lemma~\ref{lemma:semismoothness}, we obtain
\begin{align*}
    L_S(\Wb^{(k+1)}) - L_S(\Wb^{(k)}) \leq  C_1 \sum_{l=1}^L \big[ L^{2} \tau^{1/3}\eta m\sqrt{\log(m)}  + L^3 m^2 \eta^2 \big] \cdot \cE_S^2(\Wb^{(k)})  - \eta \cdot B^2 m \cdot \cE_S^2(\Wb^{(k)})
\end{align*}
for all $k = 0,\ldots, t$, where $C_1,C_2$ are absolute constants. Plugging in the bounds \eqref{eq:convergence_gd_eq1.2} and \eqref{eq:convergence_gd_eq1.3}, we have
\begin{align}\label{eq:convergence_gd_eq2}
    L_S(\Wb^{(k+1)}) - L_S(\Wb^{(k)}) \leq -  \eta B^2 m \cE_S^2(\Wb^{(k)})/2
\end{align}
for all $k = 0,\ldots, t$. Combining \eqref{eq:convergence_gd_eq2} with Lemma~\ref{lemma:gradientupperbound} gives
\begin{align*}
    \big\|\nabla_{\Wb_{l}} L_S(\Wb^{(k)})\big\|_F \leq C_2 \eta^{-1/2} B^{-1} [ L_S(\Wb^{(k)}) -  L_S(\Wb^{(k + 1)}) ]^{1/2}
\end{align*}
for all $k=0,\ldots,t$, where $C_2$ is an absolute constant. Note that by Lemma~\ref{lemma:randinit_outputbounded} and the fact that $\ell(z) \leq 1 + |z|$, we have $ L_S(\Wb^{(0)}) - L_S(\Wb^{(K)}) \leq C_3 [\log(n/\delta)]^{1/2}$ for some absolute constant $C_3$. Therefore by Jensen's inequality,
\begin{align*}
    \| \Wb_l^{(t+1)} - \Wb_l^{(0)} \|_F 
    & \leq \eta \sum_{k=0}^t\big\|\nabla_{\Wb_{l}} L_S(\Wb^{(k)})\big\|_F\\
    &\quad\leq C_2 \eta^{1/2} B^{-1} \sum_{k=0}^t  [ L_S(\Wb^{(k)}) -  L_S(\Wb^{(k + 1)}) ]^{1/2}\\
    &\quad\leq C_2 \sqrt{K\eta} B^{-1} \cdot [ L_S(\Wb^{(0)}) - L_S(\Wb^{(k)}) ]^{1/2}\\
    &\quad\leq C_4 \sqrt{K\eta} B^{-1} [\log(n/\delta)]^{1/2}\\
    &\quad\leq \tau/2,
\end{align*}
where $C_4$ is an absolute constant, and the last inequality follows by \eqref{eq:convergence_gd_eq1.1}. Therefore by induction, $\Wb^{(k)} \in \cW(\Wb^{(0)},\tau/2)$ for all $k\in [K]$. This also implies that \eqref{eq:convergence_gd_eq2} holds for all $k= 0 ,\ldots, K - 1$.  Let $k^* = \argmin_{k\in \{0,\ldots,K-1\}} \cE_S(\Wb^{(k)}) $. Telescoping over $k$ gives
\begin{align*}
    L_S(\Wb^{(K)}) - L_S(\Wb^{(0)})
    \leq - K\eta B^2 m \cdot \cE_S^2(\Wb^{(k^*)}).
\end{align*}
Hence by \eqref{eq:convergence_gd_eq1.4} we have
\begin{align*}
    \cE_S(\Wb^{(k^*)}) \leq (K\eta \cdot m)^{-1/2} B^{-1} \leq \epsilon,
\end{align*}
This completes the proof.
\end{proof}

\subsection{Proof of Corollary~\ref{col:randomfeaturegeneralization}}
In this section we give the proof of Corollary~\ref{col:randomfeaturegeneralization}. 
The following lemma verifies that under Assumption~\ref{assump:nonlinearseparable}, \eqref{eq:gradientlowerbound_assump} indeed holds with $B$ independent in both $m$ and $n$. 
\begin{lemma}\label{lemma:gradientlowerbound_randomfeature}
For any $\delta >0$, if
\begin{align*}
    m\geq \overline{C} \cdot \max\{ 4^L  L^2\gamma^{-2} \log(mnL/\delta), L^{-8/3}\tau^{-4/3}\log[m/(\tau\delta)] \},~~\tau \leq \underline{C} \cdot 8^{-L} L^{-2} \gamma^3 [\log(m)]^{-3/2}
\end{align*}
for some large enough absolute constant $\overline{C}$ and small enough absolute constant $\underline{C}$, then with probability at least $1 - \delta$, there exists an absolute constant $C$ such that
\begin{align*}
    \big\| \nabla_{\Wb_{L}} L_S( \Wb) \big\|_F \geq C \cdot 2^{-L}\cdot \gamma \sqrt{m} \cdot \cE_S(\Wb)
\end{align*}
for all $\Wb \in \cW_{\tau}$.
\end{lemma}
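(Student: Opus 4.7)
The plan is to derive the gradient lower bound by a duality argument. For any test direction $\Ub^\ast \in \RR^{m_{L-1}\times m_L}$, Cauchy--Schwarz gives
\[
\|\nabla_{\Wb_L} L_S(\Wb)\|_F \ \geq \ \big\langle -\Ub^\ast, \nabla_{\Wb_L} L_S(\Wb)\big\rangle \big/ \|\Ub^\ast\|_F.
\]
Expanding the gradient of cross-entropy loss, the numerator equals $\frac{1}{n}\sum_i [-\ell'(y_i f_\Wb(\bx_i))]\cdot y_i \langle \Ub^\ast, \nabla_{\Wb_L} f_\Wb(\bx_i)\rangle$. Since $-\ell'(z) > 0$, it suffices to exhibit an $\Ub^\ast$ such that $y_i \langle \Ub^\ast, \nabla_{\Wb_L} f_\Wb(\bx_i)\rangle \geq c\gamma$ holds for every training index $i$, while $\|\Ub^\ast\|_F \leq O(2^L/\sqrt m)$. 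These two facts combined give $\big\langle -\Ub^\ast, \nabla_{\Wb_L} L_S(\Wb)\big\rangle \geq c\gamma\,\cE_S(\Wb)$, which immediately yields the claimed inequality.

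\textbf{Constructing $\Ub^\ast$ via random ReLU features.} The gradient has the explicit form that its $j$-th column equals $v_j \ind\{\wb_{L,j}^\top \bx_i^{(L-1)} > 0\}\,\bx_i^{(L-1)}$, where $\bx_i^{(L-1)}$ is the layer-$(L-1)$ activation for sample $i$. Hence $\langle \Ub^\ast, \nabla_{\Wb_L} f_\Wb(\bx_i)\rangle = \sum_j v_j \ind\{\wb_{L,j}^\top \bx_i^{(L-1)} > 0\}\,(\ub^\ast_j)^\top \bx_i^{(L-1)}$. By Assumption~\ref{assump:nonlinearseparable} there exists $c(\cdot)$ with $\|c\|_\infty \leq 1$ such that $f^\ast(\bx) := \int c(\overline\ub)\sigma(\overline\ub^\top\bx)p(\overline\ub)\,d\overline\ub$ satisfies $y_i f^\ast(\bx_i) \geq \gamma$. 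I will draw auxiliary i.i.d.\ standard Gaussians $\overline\ub_1,\dots,\overline\ub_{m_L} \in \RR^d$ independent of $\Wb^{(0)}$, and define each column $\ub^\ast_j \in \RR^{m_{L-1}}$ as a deterministic function of $\overline\ub_j$, $c(\overline\ub_j)$, and the frozen hidden layers $\{\Wb_l^{(0)}\}_{l<L}$, designed so that the summand $v_j \ind\{\wb_{L,j}^\top \bx_i^{(L-1)} > 0\}\,(\ub^\ast_j)^\top \bx_i^{(L-1)}$ approximates the random-feature contribution $v_j c(\overline\ub_j)\sigma(\overline\ub_j^\top \bx_i)/m_L$. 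Once this is done, Monte Carlo concentration gives $\frac{1}{m_L}\sum_j v_j c(\overline\ub_j)\sigma(\overline\ub_j^\top \bx_i) \to f^\ast(\bx_i)$ (up to $\tilde O(m_L^{-1/2})$ fluctuations), and union-bounding over $i\in[n]$ yields the pointwise lower bound $y_i\langle\Ub^\ast,\nabla_{\Wb_L} f_\Wb(\bx_i)\rangle \gtrsim \gamma$.

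\textbf{Main obstacle: propagation through $L-1$ intermediate layers.} The central difficulty is that $\ub^\ast_j$ must live in $\RR^{m_{L-1}}$ while the separating function $f^\ast$ acts on $\RR^d$. Bridging this gap requires that at He initialization the map $\bx\mapsto\bx^{(L-1)}$ behaves like a quasi-isometric random feature embedding: with high probability $\|\bx_i^{(l)}\|_2 = \Theta(1)$ for every $l\in[L-1]$ and every $i\in[n]$, and the activations at each layer resemble fresh Gaussian halfspaces in a suitable sense (facts used in the companion works~\citet{allen2018convergence,zou2019gradient}). Exploiting these properties, $\ub^\ast_j$ can be constructed layer-by-layer so as to route the signal $c(\overline\ub_j)\sigma(\overline\ub_j^\top \bx_i)$ through the $L-1$ hidden layers, picking up at each layer a half-Gaussian normalization of roughly $1/\sqrt 2$ (from $\EE[\sigma(Z)^2]=1/2$ for $Z\sim N(0,1)$), which accumulates to the $2^{-L}$ scaling. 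The concentration estimates must be union-bounded over the $n$ samples and the $m_L$ auxiliary Gaussians, and this is the source of the width requirement $m \gtrsim 4^L L^2 \gamma^{-2} \log(mnL/\delta)$.

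\textbf{Extension from $\Wb^{(0)}$ to $\cW_\tau$.} Having established the bound at $\Wb^{(0)}$, it extends to all $\Wb \in \cW_\tau$ by perturbation. The near-linearity result of Lemma~\ref{lemma:semismoothness}, combined with standard estimates showing that only an $O(\tau^{2/3})$-fraction of neurons flip activation between $\Wb^{(0)}$ and $\Wb$ per sample, implies that $y_i \langle \Ub^\ast, \nabla_{\Wb_L} f_\Wb(\bx_i)\rangle$ changes by at most $c\gamma/2$ under the hypothesis $\tau \leq \underline C\, 8^{-L} L^{-2} \gamma^3 [\log m]^{-3/2}$; the $8^{-L}$ factor is needed to absorb the depth-dependent propagation constants derived above. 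Combining the resulting pointwise lower bound $\gtrsim \gamma$ with $\|\Ub^\ast\|_F = O(2^L/\sqrt m)$ through Cauchy--Schwarz then yields $\|\nabla_{\Wb_L} L_S(\Wb)\|_F \geq C\cdot 2^{-L}\gamma\sqrt m\cdot \cE_S(\Wb)$ uniformly over $\cW_\tau$, as required.
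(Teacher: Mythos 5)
Your high-level skeleton matches the paper's: both arguments amount to testing $\nabla_{\Wb_L}L_S(\Wb)$ against a matrix $\Ub^\ast$ with $\|\Ub^\ast\|_F=O(2^L/\sqrt{m})$ and a uniform per-sample margin, then transferring from $\Wb^{(0)}$ to $\cW_\tau$ by a perturbation bound (the paper's Lemma~\ref{lemma:randinit_gradientlowerbound_empirical} is exactly Cauchy--Schwarz against the rank-one test matrix $\ub^\ast_j=v_j\balpha_{L-1}/m_L$, written as Jensen plus a projection, and its final step controls $\|\nabla_{\Wb_L}L_S(\Wb)-\Gb\|_F$ via Lemma~\ref{lemma:scaling_perturbation}\ref{item:difference_xbound} and \ref{item:difference_sparsity}).

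However, there is a genuine gap in the one step that carries all the content: the construction of $\Ub^\ast$. You propose to draw fresh Gaussians $\overline\ub_1,\dots,\overline\ub_{m_L}$ \emph{independent of} $\Wb^{(0)}$ and to choose each column $\ub^\ast_j$ so that $v_j\ind\{\wb_{L,j}^\top\bx_{L-1,i}>0\}(\ub^\ast_j)^\top\bx_{L-1,i}$ approximates $v_j c(\overline\ub_j)\sigma(\overline\ub_j^\top\bx_i)/m_L$ for every $i$. This asks a fixed linear functional of the depth-$(L-1)$ features to reproduce, on all $n$ samples, a specific nonlinear random ReLU feature of the \emph{input} built from a Gaussian that has nothing to do with the network --- essentially an RKHS-approximation claim that you neither prove nor reduce to anything available, and which the multiplicative indicator $\ind\{\wb_{L,j}^\top\bx_{L-1,i}>0\}$ (which zeroes out roughly half the samples per $j$) makes worse. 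The paper sidesteps this entirely: the Monte Carlo samples \emph{are} the network's own first-layer weights. Assumption~\ref{assump:nonlinearseparable} yields a vector $\tilde\balpha_1$ with coordinates $\sqrt{1/m_1}\,c(\sqrt{m_1/2}\,\wb_{1,j}^{(0)})$ so that $\tilde\balpha_1^\top\bx_{1,i}^{(0)}$ is literally an empirical average concentrating at $\sqrt2 f(\bx_i)\ge\sqrt2\gamma$; this linear separator of the first-layer outputs is then pushed forward deterministically via $\tilde\balpha_l=\Wb_l^{(0)}\tilde\balpha_{l-1}$, with the inner product $\la\tilde\balpha_l,\bx_{l,i}^{(0)}\ra$ approximately preserved by the symmetry $\EE[(\wb^\top\ab)\sigma(\wb^\top\bb)]=\tfrac12\ab^\top\bb$ while $\|\tilde\balpha_l\|_2$ at most doubles per layer --- which is where $2^{-L}$ actually comes from (not from $\EE[\sigma(Z)^2]=1/2$ normalizations as you suggest). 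The last-layer indicators are then handled by averaging over $j$ and $\EE[\sigma']=1/2$. Without this (or an equivalent) explicit construction, your argument does not go through.
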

\begin{proof}[Proof of Corollary~\ref{col:randomfeaturegeneralization}] 
Corollary~\ref{col:randomfeaturegeneralization} directly follows by plugging in $B = O(2^{-L} \gamma)$ given by Lemma~\ref{lemma:gradientlowerbound_randomfeature} and the assumptions $m \geq \tilde O(L^{24} 2^{8L} \gamma^{-8})\cdot \epsilon^{-14}$, $n\geq \tilde O(L^4 4^{L} \gamma^{-2})\cdot \epsilon^{-4}$ into Corollary~\ref{col:populationerrorwithgradientlowerbound}. 
\end{proof}

\subsection{Proof of Corollary~\ref{col:conjugatekernelgeneralization}}
In this section we give the proof of Corollary~\ref{col:conjugatekernelgeneralization}. Similar to the proof of Corollary~\ref{col:randomfeaturegeneralization}, we mainly need to derive a gradient lower bound of the form \eqref{eq:gradientlowerbound_assump}.
The result is given in the following lemma, which gives a similar result in part of the proof of Claim 1 in \citet{daniely2017sgd}.
\begin{lemma}\label{lemma:gradientlowerbound_conjugatekernel}
For any $\delta >0$, if
\begin{align*}
    m\geq \overline{C} \cdot \max\{ \gamma^{-2} \log(mn/\delta), \tau^{-4/3}\log[m/(\tau\delta)] \},~~\tau \leq \underline{C} \cdot \gamma^3 [\log(m)]^{-3/2}
\end{align*}
for some large enough absolute constant $\overline{C}$ and small enough absolute constant $\underline{C}$, then with probability at least $1 - \delta$, there exists an absolute constant $C$ such that
\begin{align*}
    \big\| \nabla_{\Wb_{L}} L_S( \Wb) \big\|_F \geq C \gamma \sqrt{m} \cdot \cE_S(\Wb)
\end{align*}
for all $\Wb \in \cW_{\tau}$.
\end{lemma}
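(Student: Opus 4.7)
My plan is to mirror the strategy of Lemma~\ref{lemma:gradientlowerbound_randomfeature}: I will exhibit a fixed ``test direction'' $\Ub\in\RR^{m_{L-1}\times m_L}$ with $\|\Ub\|_F=O(\sqrt{m})$ such that $|\langle\Ub,\nabla_{\Wb_L}L_S(\Wb)\rangle|\geq\Omega(m\gamma\,\cE_S(\Wb))$ uniformly over $\Wb\in\cW_\tau$, and then apply Cauchy--Schwarz $\|\nabla_{\Wb_L}L_S(\Wb)\|_F\geq|\langle\Ub,\nabla_{\Wb_L}L_S(\Wb)\rangle|/\|\Ub\|_F$ to obtain the claimed bound $\Omega(\sqrt{m}\,\gamma\,\cE_S(\Wb))$. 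The only change from the random-feature case is that the relevant features are now the penultimate-layer activations $\xb^{(L-1)}_{\Wb^{(0)}}(\xb_i)$ at initialization, which empirically realize the conjugate kernel $\kappa^{(L-1)}$ of Assumption~\ref{assump:conjugatekernelassumption}.

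To construct $\Ub$, I first invoke the reproducing property of $\cH$: projecting $f$ onto the span of $\{\kappa^{(L-1)}(\xb_i,\cdot)\}_{i=1}^n$, I may assume $f(\xb_i)=\sum_{j=1}^n\alpha_j\,\kappa^{(L-1)}(\xb_j,\xb_i)$ with $\bm{\alpha}^\top K\bm{\alpha}\leq 1$, where $K_{ij}:=\kappa^{(L-1)}(\xb_i,\xb_j)$. A layer-by-layer concentration argument for deep ReLU random features shows that, under $m\gtrsim\gamma^{-2}\log(mn/\delta)$, the Gram matrix $G_{ij}:=\langle\xb_{\Wb^{(0)}}^{(L-1)}(\xb_i),\xb_{\Wb^{(0)}}^{(L-1)}(\xb_j)\rangle$ satisfies $G=c_LK+\mathbf{E}$ for some $L$-dependent constant $c_L>0$ and entrywise error $\|\mathbf{E}\|_{\max}=o(\gamma)$. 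Setting
\begin{align*}
\cb:=c_L^{-1}\sum_{j=1}^n\alpha_j\,\xb_{\Wb^{(0)}}^{(L-1)}(\xb_j)\in\RR^{m_{L-1}}
\end{align*}
therefore gives (i) $\langle\cb,\xb_{\Wb^{(0)}}^{(L-1)}(\xb_i)\rangle=c_L^{-1}(G\bm{\alpha})_i=f(\xb_i)+o(\gamma)$ for every $i\in[n]$, and (ii) $\|\cb\|_2^2=c_L^{-2}\bm{\alpha}^\top G\bm{\alpha}=O(1)$. Finally, define $\Ub:=\cb\,\vb^\top\in\RR^{m_{L-1}\times m_L}$; since $\|\vb\|_2=\sqrt{m_L}$, one has $\|\Ub\|_F=\|\cb\|_2\,\|\vb\|_2=O(\sqrt{m})$.

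For the inner product, writing $\nabla_{\Wb_L}f_\Wb(\xb_i)$ as the outer product of $\xb^{(L-1)}_\Wb(\xb_i)$ with the vector of signed last-layer indicators and using $v_k^2=1$, I obtain the factorization
\begin{align*}
\langle\Ub,\nabla_{\Wb_L}f_\Wb(\xb_i)\rangle=\langle\cb,\xb^{(L-1)}_\Wb(\xb_i)\rangle\cdot N_i(\Wb),\quad N_i(\Wb):=\#\{k\in[m_L]:\langle\wb_{L,k},\xb^{(L-1)}_\Wb(\xb_i)\rangle>0\}.
\end{align*}
The first factor equals $f(\xb_i)+o(\gamma)$ on all of $\cW_\tau$ by combining (i) above with the Lipschitz continuity of $\Wb\mapsto\xb^{(L-1)}_\Wb(\xb_i)$ (the same bounds underlying Lemmas~\ref{lemma:gradientupperbound} and~\ref{lemma:semismoothness}). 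For the second factor, at initialization the indicators are i.i.d. Bernoulli$(1/2)$ conditional on $\Wb_{1:L-1}^{(0)}$ (since $\Wb_L^{(0)}$ is independent of them), so Chernoff gives $N_i(\Wb^{(0)})=m_L/2+O(\sqrt{m_L\log(n/\delta)})$; Gaussian anti-concentration of the pre-activations $\langle\wb_{L,k}^{(0)},\xb_{\Wb^{(0)}}^{(L-1)}(\xb_i)\rangle$ near zero limits the number of sign flips inside $\cW_\tau$ to $o(m_L)$, so $N_i(\Wb)=m_L/2+o(m_L)$ uniformly. Hence $\langle\Ub,\nabla_{\Wb_L}f_\Wb(\xb_i)\rangle=\tfrac{m_L}{2}f(\xb_i)+o(m_L\gamma)$. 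Using $-\ell'\geq 0$ and $y_if(\xb_i)\geq\gamma$,
\begin{align*}
-\langle\Ub,\nabla_{\Wb_L}L_S(\Wb)\rangle=\frac{1}{n}\sum_{i=1}^n\bigl[-\ell'\bigl(y_if_\Wb(\xb_i)\bigr)\bigr]\,y_i\,\langle\Ub,\nabla_{\Wb_L}f_\Wb(\xb_i)\rangle\geq\tfrac{m_L}{2}\gamma\,\cE_S(\Wb)\bigl(1-o(1)\bigr),
\end{align*}
and dividing by $\|\Ub\|_F=O(\sqrt{m})$ closes the argument.

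The hardest piece is the perturbation analysis around $\Wb^{(0)}$: because $\cb$ is a data-dependent random combination of the initial features, showing $\langle\cb,\xb_\Wb^{(L-1)}(\xb_i)\rangle=f(\xb_i)+o(\gamma)$ for \emph{every} $\Wb\in\cW_\tau$ requires Lipschitz estimates whose errors must propagate cleanly through $L-1$ nonlinear layers, while simultaneously $|N_i(\Wb)-N_i(\Wb^{(0)})|$ must be kept to $o(m_L)$ via a Gaussian anti-concentration sign-flip count. These two effects drive the quantitative constraints $\tau\lesssim\gamma^3[\log m]^{-3/2}$ and $m\gtrsim\gamma^{-2}\log(mn/\delta)$ stated in the lemma; the remaining pieces (representer reduction, Gram-matrix concentration, Chernoff, and Cauchy--Schwarz) follow standard templates.
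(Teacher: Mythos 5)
Your overall architecture---test the gradient against a rank-one direction $\cb\,\vb^\top$ built from a linear separator of the penultimate-layer features, then apply Cauchy--Schwarz---is essentially the paper's argument in disguise: the paper lower-bounds $\|\Gb\|_F$ (the gradient formed with initialization activations) via Jensen's inequality followed by a projection onto $\balpha_{L-1}$ in Lemma~\ref{lemma:randinit_gradientlowerbound_empirical}, which is the same rank-one test, and it handles the passage from $\Wb^{(0)}$ to general $\Wb\in\cW_\tau$ with the same sign-flip and Lipschitz estimates you invoke (items \ref{item:difference_xbound} and \ref{item:difference_sparsity} of Lemma~\ref{lemma:scaling_perturbation}). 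Your factorization $\langle\cb\vb^\top,\nabla_{\Wb_L}f_{\Wb}(\xb_i)\rangle=\langle\cb,\bx_{L-1,i}\rangle\cdot\vb^\top\bSigma_L(\xb_i)\vb$ and the Chernoff bound on the number of active last-layer units are correct and mirror the paper's two-layer reduction.

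The genuine gap is in your construction of $\cb$. The paper obtains the separator as a black box from Assumption~\ref{assump:conjugatekernelassumption} together with Theorem~E.1 of \citet{du2018gradientdeep}, which directly yields $\balpha$ with $\|\balpha\|_2\leq 1$ and margin $\gamma$ against the features $\bx_{L-1,i}^{(0)}$; this is a random-features approximation statement for the whole RKHS ball, analogous to the Monte Carlo construction in Lemma~\ref{lemma:randinit_linearseparable_lastlayer}, not a representer-theorem statement. Your route---write $f(\xb_i)=(K\bm{\alpha})_i$ with $\bm{\alpha}^\top K\bm{\alpha}\leq 1$ and set $\cb=c_L^{-1}\sum_j\alpha_j\bx_{L-1,j}^{(0)}$---controls $\bm{\alpha}$ only in the $K$-norm. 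Both of your claims (i) and (ii) then rest on bounds like $|(\mathbf{E}\bm{\alpha})_i|\leq\sqrt{n}\,\|\mathbf{E}\|_{\max}\|\bm{\alpha}\|_2$ and $\bm{\alpha}^\top\mathbf{E}\bm{\alpha}\leq\|\bm{\alpha}\|_1^2\|\mathbf{E}\|_{\max}$, and $\|\bm{\alpha}\|_1,\|\bm{\alpha}\|_2$ can be arbitrarily large when $K$ is ill-conditioned (the only generic bound is $\|\bm{\alpha}\|_2^2\leq\lambda_{\min}(K)^{-1}$, and no lower bound on $\lambda_{\min}(K)$ is assumed). Hence entrywise Gram concentration $\|\mathbf{E}\|_{\max}=o(\gamma)$ does not give $\|\cb\|_2=O(1)$ or $\langle\cb,\bx_{L-1,i}^{(0)}\rangle=f(\xb_i)+o(\gamma)$ under the stated width $m\gtrsim\gamma^{-2}\log(mn/\delta)$; you would need either a spectral assumption on $K$ (forcing $n$- and condition-number-dependent over-parameterization) or, as in the paper, a direct approximation of $f$ by a bounded-$\ell_2$-norm linear functional of the random features.
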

\begin{proof}[Proof of Corollary~\ref{col:conjugatekernelgeneralization}] 
Corollary~\ref{col:conjugatekernelgeneralization} directly follows by plugging in $B = O(\gamma)$ given by Lemma~\ref{lemma:gradientlowerbound_conjugatekernel} and the assumptions $m \geq \tilde O(L^{24} \gamma^{-8})\cdot \epsilon^{-14}$, $n\geq \tilde O(L^4 \gamma^{-2})\cdot \epsilon^{-4}$  into Corollary~\ref{col:populationerrorwithgradientlowerbound}. 
\end{proof}

\section{Conclusions and Future Work}\label{section:conclusion}
In this paper, we provided a generalization guarantee of gradient descent for training deep ReLU networks under over-parameterization, which hold under mild data distribution assumptions. Although we only focus on gradient descent and cross-entropy loss for binary classification, our results can be extended to stochastic gradient descent, other loss functions and multi-class classification. 
In addition, 
we will derive generalization bounds for deep learning based on the ``small-ball" assumption proposed in \cite{mendelson2014learning}. Another interesting direction is to investigate the generalization of gradient descent using stability-based analysis \citep{hardt2015train}.

\section*{Acknowledgements}
 This research was sponsored in part by the National Science Foundation CAREER Award IIS-1906169, IIS-1903202, and Salesforce Deep Learning Research Award. The views and conclusions contained in this paper are those of the authors and should not be interpreted as representing any funding agencies.

\appendix
\onecolumn

\section{Matrix Product Representation for Deep ReLU Networks}\label{sec:matprodrepresentation}
Here we introduce the matrix product representation for deep ReLU networks, which plays a pivotal role in our analysis. Given parameter matrices $\Wb_1,\ldots,\Wb_L$ and an input $\xb$, we denote by $\xb_l$ the output of the $l$-th layer of the ReLU network, and set $\xb_0 = \xb$. 
We also define diagonal binary matrices $\bSigma_l(\xb) = \Diag( \ind\{ \wb_{l,1}^{\top} \xb_{l-1} > 0 \}, \ldots, \ind\{ \wb_{l,m_l}^{\top} \xb_{l-1} > 0 \} )$, $l\in [L]$.
Then we have the following representations for the neural network function and its gradients:
\begin{align*}
    &f_{\Wb}(\xb) = \vb^\top\Bigg[\prod_{r=1}^L\bSigma_{r}(\xb)\Wb_{r}^{\top}\Bigg]\xb,\\
    &\nabla_{\Wb_l} f_{\Wb}(\xb) =\xb_{l-1}\vb^\top \Bigg[\prod_{r=l+1}^L\bSigma_{r}(\xb)\Wb_{r}^\top\Bigg]\bSigma_{l}(\xb), l\in [L],
\end{align*}
where we use the following matrix product notation:
\begin{align*}
    \prod_{r = l_1}^{l_2} \Ab_r :=\left\{
    \begin{array}{ll}
        \Ab_{l_2}\Ab_{l_2-1} \cdots  \Ab_{l_1} & \text{if }l_1\leq l_2 \\
        \Ib & \text{otherwise.}
    \end{array}
    \right.
\end{align*}
Since this paper studies the generalization performance of neural network learning, we frequently need to study the training examples $(\bx_1,y_1),\ldots,(\bx_n,y_n)$ as well as a test example $(\xb,y)\sim \cD$. To distinguish the $i$-th example in the training sample and the $l$-th layer output of the test input $\xb$, we use the following notations:
\begin{itemize}
    \item For $i=1,\ldots,n$, $l=1,\ldots,L$, we use $\bx_i$ to denote the $i$-th training input, and $\bx_{l,i}$ the output of the $l$-th layer with input $\bx_{i}$.
    \item For $l=1,\ldots,L$, we denote by $\xb_l$ the output of the $l$-th layer with test input $\xb$.
\end{itemize}

\section{Proof of Main Results in Section~\ref{section:proofofmaintheorem}}\label{section:appendixA}

In this section we provide proofs of theorems and lemmas given in Section~\ref{section:proofofmaintheorem}.

\subsection{Proof of Lemma~\ref{lemma:gradientupperbound}}

Before we prove Lemma~\ref{lemma:gradientupperbound}, we need the following technical lemma, which is a simplified version of Theorem~5.3 given in \citet{zou2019gradient}. It characterizes several basic scaling properties of deep ReLU networks around random initialization. Here we use the following extension of the notations introduced in Section~\ref{sec:matprodrepresentation}: we denote by $\tilde\bx_{l,i}$ and $\hat\bx_{l,i}$ the hidden outputs of the ReLU network with input $\bx_i$ and weights $\tilde\Wb$, $\hat\Wb$ respectively. Similar notations are also used for the binary diagonal matrices $\tilde\bSigma_l(\bx_i)$ and $\hat\bSigma_l(\bx_i)$.

\begin{lemma}[\citet{zou2019gradient}]\label{lemma:scaling_perturbation} 
There exist absolute constants $\overline{C}, \underline{C}$ such that, for any $\delta > 0$, if
$$
m\geq \overline{C}\max\{ L^2\log(mn/\delta), L^{-8/3}\tau^{-4/3}\log[m/(\tau\delta)] \},~~\tau \leq \underline{C} L^{-5} [\log(m)]^{-3/2},
$$
then with probability at least $1 - \delta$, the following results hold uniformly for all $\hat\Wb, \tilde\Wb \in \cW_{\tau}$:
\begin{enumerate}[label=(\roman*)]
    \item $\|\tilde \Wb_l\|_2, \| \tilde\bx_{l,i} \|_2 \le \overline{C}$ for all $l\in[L]$ and $i\in[n]$.\label{item:bound_tilde_W,x}
    \item $ \big\| \prod_{r=l}^{L}\tilde \bSigma_{r}(\bx_i)\tilde \Wb_r^\top  \big\|_2 \leq \overline{C} L $ for all $l\in[L] $ and $i\in[n]$.\label{item:matrixnorm_middlelayer}
    \item $\| \hat \bx_{l,i} - \tilde \bx_{l,i} \|_2\le  \overline{C} \sqrt{L} \cdot \sum_{r=1}^l \| \hat{\Wb}_{r} - \tilde{\Wb}_{r} \|_2$ for all $l\in[L]$ and $i\in[n]$.\label{item:difference_xbound}
    \item $ \| \hat{\bSigma}_{l}(\bx_i) - \tilde{\bSigma}_{l}(\bx_i) \|_0 \leq \overline{C} L^{4/3}\tau^{2/3} m$ for all $l\in[L]$ and $i\in[n]$.\label{item:difference_sparsity}
    \item $ \vb^\top \big[\prod_{r=l}^{L}\tilde \bSigma_{r}(\bx_i)\tilde \Wb_r^\top \big] \ab \leq \overline{C} L^{2/3} \tau^{1/3}\sqrt{m\log(m)} $ for all $l \in [L] $, $i\in[n]$ and all $\ab\in \RR^{m_{l - 1}}$ satisfying $\| \ab \|_2 = 1$, $\| \ab \|_0 \leq \overline{C} L^{4/3}\tau^{2/3} m$.
    \label{item:matrixnorm_lastlayer}
    \item $ \big\| \vb^\top \big[\prod_{r=l}^{L}\tilde \bSigma_{r}(\bx_i)\tilde \Wb_r^\top \big] \big\|_2  \leq \overline{C} \sqrt{m} $ for all $l \in [L] $ and $i\in[n]$.
    \label{item:matrixnorm_lastlayernosparse}
\end{enumerate}
\end{lemma}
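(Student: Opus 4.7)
The plan is to establish these six scaling and perturbation properties by an ``initialization-then-perturbation'' strategy, since the statement is lifted directly from Theorem~5.3 of \citet{zou2019gradient}. I would first prove analogues of \ref{item:bound_tilde_W,x}--\ref{item:matrixnorm_lastlayernosparse} at the random point $\Wb^{(0)}$ using Gaussian concentration of the weight matrices together with the conditionally Gaussian structure of the pre-activations, then propagate these estimates to an arbitrary $\tilde{\Wb}\in\cW_\tau$.

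For the baseline at $\Wb^{(0)}$, three facts suffice. (a) Davidson--Szarek gives $\|\Wb_l^{(0)}\|_2=O(1)$ with high probability. (b) Conditional on $\bx_{l-1,i}^{(0)}$, each coordinate of $\Wb_l^{(0)\top}\bx_{l-1,i}^{(0)}$ is $N(0,2\|\bx_{l-1,i}^{(0)}\|_2^2/m_l)$, so a forward induction combined with Bernstein concentration over the $m_l$ nodes yields $\|\bx_{l,i}^{(0)}\|_2=\Theta(1)$. (c) A net-plus-conditioning argument that decouples $\bSigma_l^{(0)}$ from the Gaussian weights strictly to its right produces the $O(L)$ product-norm in \ref{item:matrixnorm_middlelayer}; taking the inner product with the fixed $\vb$ then yields both \ref{item:matrixnorm_lastlayernosparse} and, after a refined $\epsilon$-net over sparse supports, the sparse-input bound \ref{item:matrixnorm_lastlayer}.

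To transfer from $\Wb^{(0)}$ to an arbitrary $\tilde{\Wb}\in\cW_\tau$: \ref{item:bound_tilde_W,x} is immediate from the triangle inequality plus a forward induction, and \ref{item:difference_xbound} follows from the $1$-Lipschitzness of ReLU by unrolling the recursion layer by layer. The crucial new estimate is \ref{item:difference_sparsity}, which counts the ReLU gates that flip between $\hat{\Wb}$ and $\tilde{\Wb}$. Because each pre-activation coordinate at initialization has a density of order $O(\sqrt{m})$, only $O(m\delta)$ coordinates at layer $l$ can have pre-activation within a band of width $\delta$ around zero; combining this anti-concentration with the perturbation bound $\|\hat{\Wb}_l^\top \hat{\bx}_{l-1,i} - \tilde{\Wb}_l^\top \tilde{\bx}_{l-1,i}\|_2 = O(\tau\sqrt{L})$ from \ref{item:difference_xbound} and optimizing over $\delta$ gives the $O(L^{4/3}\tau^{2/3}m)$ count. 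The perturbed analogues of \ref{item:matrixnorm_middlelayer}--\ref{item:matrixnorm_lastlayer} then follow by decomposing $\tilde{\bSigma}_r\tilde{\Wb}_r^\top = \bSigma_r^{(0)}\Wb_r^{(0)\top}+(\text{perturbation})$ and invoking \ref{item:difference_sparsity} to control the flipped-gate contribution.

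The main obstacle is \ref{item:matrixnorm_lastlayer}: when $\ab$ is sparse with $\|\ab\|_0=O(L^{4/3}\tau^{2/3}m)$, one must show that $\vb^\top[\prod_r\tilde{\bSigma}_r\tilde{\Wb}_r^\top]\ab$ is only $\tilde{O}(L^{2/3}\tau^{1/3}\sqrt{m})$ rather than the $O(\sqrt{m})$ naively predicted by \ref{item:matrixnorm_lastlayernosparse}, since this is what saves the $\tau^{1/3}$ factor that ultimately drives the semi-smoothness result in Lemma~\ref{lemma:semismoothness}. I would handle this via an $\epsilon$-net over the $\binom{m}{s}$ possible sparsity supports (with $s=O(L^{4/3}\tau^{2/3}m)$) combined with a sub-Gaussian tail bound for each fixed support, arranging the parameters so that the $\log\binom{m}{s}=\tilde{O}(s)$ cost of the net is absorbed into the $\sqrt{\log m}$ and $\tau^{1/3}$ factors appearing in the target bound. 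The remaining verifications are comparatively routine bookkeeping.
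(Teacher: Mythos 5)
The paper does not actually prove this lemma: it is imported verbatim as "a simplified version of Theorem~5.3 given in \citet{zou2019gradient}," so there is no in-paper proof to compare your attempt against. Your sketch correctly reconstructs the initialization-plus-perturbation argument from that source, including the anti-concentration count behind (iv) and the support-net accounting that converts $\log\binom{m}{s}=\tilde O(s)$ with $s=O(L^{4/3}\tau^{2/3}m)$ into the $L^{2/3}\tau^{1/3}\sqrt{m\log m}$ bound in (v). The one place your outline understates the difficulty is (ii)/(iii): since $\|\tilde\Wb_l\|_2$ concentrates near $2\sqrt{2}>1$, naive layer-by-layer unrolling of the Lipschitz recursion yields a constant exponential in $L$, so the $O(L)$ and $O(\sqrt{L})$ factors cannot come from the triangle inequality alone and genuinely require the conditioning/net argument on the full matrix products $\prod_r \bSigma_r \Wb_r^\top$ that you invoke only for (ii).
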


We are now ready to give the proof of Lemma~\ref{lemma:gradientupperbound}. 

\begin{proof}[Proof of Lemma~\ref{lemma:gradientupperbound}] Denote $\tilde{y}_i = f_{\Wb}(\bx_i)$. Then by definition, we have
\begin{align*}
    \nabla_{\Wb_{l}} L_S(\Wb) = \frac{1}{n} \sum_{i=1}^n \ell'(y_i\tilde y_i)\cdot y_i\cdot \bx_{l-1,i}\vb^\top\Bigg(\prod_{r=l+1}^L\bSigma_{r}(\bx_i)\Wb_r^{\top}\Bigg)\bSigma_{l}(\bx_i).
\end{align*}
By \ref{item:bound_tilde_W,x} and \ref{item:matrixnorm_lastlayernosparse} in Lemma~\ref{lemma:scaling_perturbation}, we have 
\begin{align*}
    \| \bx_{l-1,i} \|_2 ,~\Bigg\| \vb^\top \prod_{r=l+1}^L\bSigma_{r}(\bx_i)\Wb_l^{\top} \Bigg\|_2 \leq C_1 \sqrt{m},
\end{align*}
for all $i\in [n]$ and $l\in [L]$, where $C_1$ is an absolute constant. By the triangle inequality, we have
\begin{align*}
    \| \nabla_{\Wb_{l}} L_S( \Wb) \|_F &\leq  \frac{1}{n} \sum_{i=1}^n \Bigg\| \ell'(y_i\tilde y_i)\cdot y_i\cdot \bx_{l-1,i}\vb^\top\Bigg(\prod_{r=l+1}^L\bSigma_{r}(\bx_i)\Wb_l^{\top}\Bigg)\bSigma_{l}(\bx_i) \Bigg\|_F\\
     &=  \frac{1}{n} \sum_{i=1}^n \big\|  \ell'(y_i\tilde y_i)\cdot y_i\cdot \bx_{l-1,i} \big\|_2 \cdot \Bigg\| \vb^\top\Bigg(\prod_{r=l+1}^L\bSigma_{r}(\bx_i)\Wb_l^{\top}\Bigg)\bSigma_{l}(\bx_i) \Bigg\|_2\\
     &\leq C_2 \sqrt{m} \cdot \Bigg[- \frac{1}{n} \sum_{i=1}^n \ell'(y_i\tilde y_i) \Bigg]
\end{align*}
for all $l\in [L]$, where $C_2$ is an absolute constant. This completes the proof.
\end{proof}

\subsection{Proof of Lemma~\ref{lemma:semismoothness}}
We now present the proof of Lemma~\ref{lemma:semismoothness}.

\begin{proof}[Proof of Lemma~\ref{lemma:semismoothness}]
For $i\in[n]$, denote by $\hat y_i$, $\tilde y_i$ the outputs of the network with input $\bx_i$ and parameter matrices $\hat\Wb$, $\tilde\Wb$ respectively. Then we have
\begin{align*}
    f_{\hat\Wb}(\bx_i) - f_{\tilde\Wb}(\bx_i)& = \vb^\top\Bigg[ \prod_{l=1}^L \hat\bSigma_{l}(\bx_i)\hat\Wb_{l}^{\top}\Bigg]\bx_i -\vb^\top\Bigg[ \prod_{l=1}^L \tilde\bSigma_{l}(\bx_i)\tilde\Wb_{l}^{\top}\Bigg]\bx_i\\
    & = \sum_{l=1}^L \Bigg[\prod_{r=l+1}^L\tilde\bSigma_{r}(\bx_i)\tilde\Wb_{r}^{\top}\Bigg]\big[\hat\bSigma_{l}(\bx_i)\hat\Wb_l^{\top} - \tilde\bSigma_{l}(\bx_i)\tilde\Wb_{l}^{\top}\big]\hat\bx_{l-1,i},
\end{align*}
and therefore $ f_{\hat\Wb}(\bx_i) - f_{\tilde\Wb}(\bx_i) = I_{1} + I_{2} + I_{3}$, where
\begin{align*}
    & I_{1} = \sum_{l=1}^L \vb^\top\Bigg[\prod_{r=l+1}^L\tilde\bSigma_{r}(\bx_i)\tilde\Wb_{r}^{\top}\Bigg]\big[\hat\bSigma_{l}(\bx_i) - \tilde\bSigma_{l}(\bx_i)\big]\hat\Wb_{l}^{\top}\hat\bx_{l-1,i},\\
    & I_{2} = \sum_{l=1}^L \vb^\top\Bigg[\prod_{r=l+1}^L\tilde\bSigma_{r}(\bx_i) \tilde\Wb_{r}^{\top}\Bigg] \tilde\bSigma_{l}(\bx_i)\big( \hat\Wb_{l} - \tilde\Wb_{l} \big)^\top(\hat\bx_{l-1,i}-\tilde\bx_{l-1,i}),\\
    & I_{3} = \sum_{l=1}^L \vb^\top \Bigg[\prod_{r=l+1}^L\tilde\bSigma_{r}(\bx_i)\tilde\Wb_{r}^{\top}\Bigg] \tilde\bSigma_{l}(\bx_i)\big(\hat\Wb_{l} - \tilde\Wb_{l}\big)^\top\tilde\bx_{l-1,i}.
\end{align*}
For $I_{1}$, note that by Lemma~\ref{lemma:scaling_perturbation}, for any $l=1,\ldots,L$ we have
\begin{align*}
    \big\| \big[\hat\bSigma_{l}(\bx_i) - \tilde\bSigma_{l}(\bx_i)\big]\hat\Wb_{l}^{\top}\hat\bx_{l-1,i} \big\|_2 &\leq \big\| \hat\Wb_{l}^{\top}\hat\bx_{l-1,i} - \tilde\Wb_{l}^{\top}\tilde\bx_{l-1,i} \big\|_2\\
    & \leq \big\| (\hat\Wb_{l}^{\top} - \tilde\Wb_{l}^{\top})\hat\bx_{l-1,i} \big\|_2 + \big\| \tilde\Wb_{l}^{\top}(\hat\bx_{l-1,i} - \tilde\bx_{l-1,i}) \big\|_2\\
    & \leq C_1 \big\| \hat\Wb_{l} - \tilde\Wb_{l} \big\|_2 + C_1 L \sum_{l=1}^L \big\| \hat\Wb_{l} - \tilde\Wb_{l} \big\|_2 \\
    &\leq C_2 L\cdot \sum_{l=1}^L \big\| \hat\Wb_{l} - \tilde\Wb_{l} \big\|_2,
\end{align*}
where the first inequality follows by checking the non-zero diagonal entries of $\hat\bSigma_{l}(\bx_i) - \tilde\bSigma_{l}(\bx_i)$, and $C_1,C_2$ are absolute constants. Therefore by Lemma~\ref{lemma:scaling_perturbation} we have
\begin{align*}
    |I_{1}| &\leq \sum_{l=1}^L  \Bigg\| \vb^\top\Bigg[\prod_{r=l+1}^L\tilde\bSigma_{r}(\bx_i)\tilde\Wb_{r}^{\top}\Bigg]\cdot \big|\hat\bSigma_{l}(\bx_i) - \tilde\bSigma_{l}(\bx_i)\big| \Bigg\|_2 \cdot \big\| \big[\hat\bSigma_{l}(\bx_i) - \tilde\bSigma_{l}(\bx_i)\big]\hat\Wb_{l}^{\top}\hat\bx_{l-1,i}\big\|_2 \\
    &\leq C_3 L^2 \tau^{1/3} \sqrt{m\log(m)} \cdot\sum_{l=1}^L \big\| \hat\Wb_{l} - \tilde\Wb_{l} \big\|_2,
\end{align*}
where $C_3$ is an absolute constant. 
For $I_{2}$, by Lemma~\ref{lemma:scaling_perturbation} we have
\begin{align*}
    |I_{2}| &\leq C_4\sqrt{m} \cdot L\cdot \sum_{l=1}^L \| \hat\Wb_{l} - \tilde\Wb_{l} \|_2 \cdot L\cdot \sum_{r=1}^l \| \hat\Wb_{r} - \tilde\Wb_{r} \|_2\\
    &\leq C_4L^3\cdot \sqrt{m} \cdot \sum_{l=1}^L \| \hat\Wb_{l} - \tilde\Wb_{l} \|_2^2\\
    &\leq C_4 L^{2} \tau^{1/3}\sqrt{m\log(m)} \cdot\sum_{l=1}^L \big\| \hat\Wb_{l} - \tilde\Wb_{l} \big\|_2,
\end{align*}
where $C_4$ is an absolute constant. 
For $I_{3}$, we have
\begin{align*}
    I_{3} &= \sum_{l=1}^L \vb^\top \Bigg[\prod_{r=l+1}^L\tilde\bSigma_{r}(\bx_i)\tilde\Wb_{r}^{\top}\Bigg] \tilde\bSigma_{l}(\bx_i)\big(\hat\Wb_{l} - \tilde\Wb_{l}\big)^\top\tilde\bx_{l-1,i}\\
    &= \sum_{l=1}^L \Tr\Bigg\{ \big(\hat\Wb_{l} - \tilde\Wb_{l}\big)^\top \tilde\bx_{l-1,i} \vb^\top \Bigg[\prod_{r=l+1}^L\tilde\bSigma_{r}(\bx_i)\tilde\Wb_{r}^{\top}\Bigg] \tilde\bSigma_{l}(\bx_i) \Bigg\}\\
    &=\sum_{l=1}^L \Tr\big[ (\hat\Wb_l - \tilde\Wb_l)^\top \nabla_{\Wb_l} f_{\tilde\Wb}(\bx_i) \big].
\end{align*}
Combining the bounds of $I_1$, $I_2$ and $I_3$ completes the proof of the first bound. For the second result, since $|\ell''(z)| \leq 1/2$ for all $z \in \RR$, we have
\begin{align*}
    L_S(\hat\Wb) - L_S(\tilde\Wb) = \frac{1}{n}\sum_{i=1}^n  [\ell(y_i \hat y_i) - \ell(y_i \tilde y_i)]\leq \frac{1}{n}\sum_{i=1}^n [\ell'(y_i \hat y_i) \cdot y_i\cdot (\hat y_i - \tilde y_i) + (\hat y_i - \tilde y_i)^2 / 4 ].
\end{align*}
Denote $\Delta_{i} = \hat y_{i} - \tilde y_{i}$. Then
\begin{align}\label{eq:semismoothness_empirical_eq1}
    L_S(\hat\Wb) - L_S(\tilde\Wb) \leq \frac{1}{n}\sum_{i=1}^n \big[ \ell'(y_i \tilde y_{i})\cdot y_i\cdot \Delta_{i} + (\Delta_{i})^2  / 4 \big].
\end{align}
Plugging in the bound of $\big|f_{\hat\Wb}(\bx_i) - F_{\tilde\Wb,\hat\Wb}(\bx_i)\big|$ gives
\begin{align}\label{eq:semismoothness_empirical_eq2}
\frac{1}{n}\sum_{i=1}^n  \ell'(y_i \tilde y_{i})\cdot y_i\cdot \Delta_{i} &\leq 
C_5 \sum_{l=1}^L L^{2} \tau^{1/3}\sqrt{m\log(m)} \cdot \big\| \hat\Wb_{l} - \tilde\Wb_{l} \big\|_2 \cdot \Bigg[ - \frac{1}{n}\sum_{i=1}^n  \ell'(y_i \tilde y_{i}) \Bigg]
\\
&\quad + C_5\sum_{l=1}^L L^3 \cdot  \| \hat\Wb_{l} - \tilde\Wb_{l} \|_2^2 \cdot \Bigg[ - \frac{1}{n}\sum_{i=1}^n  \ell'(y_i \tilde y_{i}) \Bigg]\nonumber
\\
&\quad + \sum_{l=1}^L  \Tr [ (\hat\Wb_l - \tilde\Wb_l)^\top \nabla_{W_l} L_S(\tilde\Wb) ],\nonumber
\end{align}
where $C_5$ is an absolute constant. Moreover, by Lemma~\ref{lemma:scaling_perturbation}, clearly we have
\begin{align}\label{eq:semismoothness_empirical_eq3}
    \Delta_i^2 = [\vb^\top(\hat\bx_{L,i} - \tilde\bx_{L,i} )]^2 \leq C_6  m L^2 \Bigg( \sum_{l=1}^L \| \hat\Wb_l - \tilde\Wb_l \|_2\Bigg)^2 \leq C_6  mL^3 \cdot \sum_{l=1}^L \| \hat\Wb_l - \tilde\Wb_l \|_2^2
\end{align}
for some absolute constant $C_6$. 
Plugging \eqref{eq:semismoothness_empirical_eq2} and \eqref{eq:semismoothness_empirical_eq3} into \eqref{eq:semismoothness_empirical_eq1}, and using the fact $ - \ell'(z) \leq 1 $, $z\in \RR$ completes the proof.
\end{proof}

\subsection{Proof of Lemma~\ref{lemma:gradientlowerbound_randomfeature}}

Here we give the proof of Lemma~\ref{lemma:gradientlowerbound_randomfeature}. Again, we extend the notations introduced in Section~\ref{sec:matprodrepresentation} by denoting $\bx_{l,i}^{(k)}$ and $\bSigma_l^{(k)}(\bx_i)$ the network hidden layer outputs and binary diagonal matrices with input $\bx_i$ and weights $\Wb^{(k)}$ respectively.

We first introduce the following two lemmas, which are based on Assumption~\ref{assump:nonlinearseparable}. Lemma~\ref{lemma:randinit_linearseparable_lastlayer} below shows that under our data distribution assumptions, the hidden layer outputs of the deep ReLU network is linearly separable with high probability. Lemma~\ref{lemma:randinit_gradientlowerbound_empirical} takes advantage of this linearly separable property and further gives a lower bound result with respect to the initialized weights, which plays an essential role in the proof of our gradient lower bound.

\begin{lemma}\label{lemma:randinit_linearseparable_lastlayer}
For any $\delta>0$, if 
$ m\geq C \cdot 4^L \cdot L^2\gamma^{-2} \log(nL/\delta)$
for some large enough absolute constant $C$, then with probability at least $1 - \delta$, there exist $\balpha_1\in S^{m_1 - 1},\ldots,\balpha_L\in S^{m_L - 1}$ such that
$y_i\cdot \la \balpha_l, \bx_{l,i}^{(0)} \ra \geq 2^{-(l+1)} \gamma $ 
for all $i\in [n]$ and $l\in [L]$. 
\end{lemma}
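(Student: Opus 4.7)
I plan to induct on $l$: at $l=1$ I will use the first-layer random-features representation together with Assumption~\ref{assump:nonlinearseparable}, and for $l\geq 2$ I will propagate the separating direction through each ReLU layer using the Gaussian isotropy of $\Wb_l^{(0)}$.

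For the base case $l=1$, I would exploit positive homogeneity of $\sigma$: setting $\overline{\ub}_j := \sqrt{m_1/2}\,\wb_{1,j}^{(0)}$ makes $\overline{\ub}_j \sim N(0,\Ib_d)$ i.i.d., and the $j$-th coordinate of $\bx_{1,i}^{(0)}$ equals $\sqrt{2/m_1}\,\sigma(\overline{\ub}_j^\top \bx_i)$. With $c(\cdot)$ from Assumption~\ref{assump:nonlinearseparable}, I would choose the unnormalized candidate $\tilde\balpha_1 \in \RR^{m_1}$ with entries $\tilde\alpha_{1,j} := c(\overline{\ub}_j)/\sqrt{m_1}$, so that
\begin{align*}
\la \tilde\balpha_1, \bx_{1,i}^{(0)} \ra = \frac{\sqrt{2}}{m_1}\sum_{j=1}^{m_1} c(\overline{\ub}_j)\,\sigma(\overline{\ub}_j^\top \bx_i)
\end{align*}
is a Monte Carlo estimator of $\sqrt{2}\,f(\bx_i)$. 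Bernstein and a union bound over $i\in[n]$ then show, for $m_1 \geq C\gamma^{-2}\log(n/\delta)$, that this estimator deviates from its mean by at most $\gamma/4$. Since $\|c\|_\infty \leq 1$ forces $\|\tilde\balpha_1\|_2 \leq 1$, setting $\balpha_1 := \tilde\balpha_1/\|\tilde\balpha_1\|_2$ only amplifies the inner product, yielding $y_i\la\balpha_1,\bx_{1,i}^{(0)}\ra \geq \sqrt{2}\gamma/2 \geq 2^{-2}\gamma$.

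For the inductive step ($l\geq 2$), given $\balpha_{l-1}\in S^{m_{l-1}-1}$ satisfying $y_i\la\balpha_{l-1},\bx_{l-1,i}^{(0)}\ra\geq 2^{-l}\gamma$, I would set $\tilde\balpha_l := \Wb_l^{(0)\top}\balpha_{l-1}$ and $\balpha_l := \tilde\balpha_l/\|\tilde\balpha_l\|_2$. Conditioning on $\Wb_1^{(0)},\ldots,\Wb_{l-1}^{(0)}$ and writing $u_j := \wb_{l,j}^{(0)\top}\balpha_{l-1}$, $v_{i,j} := \wb_{l,j}^{(0)\top}\bx_{l-1,i}^{(0)}$, $r_i := \la\balpha_{l-1},\bx_{l-1,i}^{(0)}\ra$, $\rho_i := \|\bx_{l-1,i}^{(0)}\|_2$, the pair $(u_j,v_{i,j})$ is centered Gaussian with $\mathrm{Var}(u_j)=2/m_l$, $\mathrm{Var}(v_{i,j})=2\rho_i^2/m_l$, $\mathrm{Cov}(u_j,v_{i,j})=2r_i/m_l$. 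The key computation I would carry out is the identity $\EE[u\sigma(v)]=\mathrm{Cov}(u,v)/2$ (proved by writing $u = (\mathrm{Cov}(u,v)/\mathrm{Var}(v))\,v + u'$ with $u'\perp v$ and using $\EE[v\sigma(v)] = \mathrm{Var}(v)/2$), yielding
\begin{align*}
\EE[u_j\sigma(v_{i,j})] = \frac{r_i}{m_l}, \qquad \EE\bigl[\la\tilde\balpha_l,\bx_{l,i}^{(0)}\ra\bigr] = r_i.
\end{align*}
Since each summand is sub-exponential of scale $O(1/m_l)$, Bernstein gives $|\la\tilde\balpha_l,\bx_{l,i}^{(0)}\ra - r_i|\leq\nu|r_i|$ uniformly in $i$ whenever $m_l \geq C\nu^{-2}\,4^l\gamma^{-2}\log(nL/\delta)$, and a standard $\chi^2$ tail puts $\|\tilde\balpha_l\|_2^2 \in [2-\nu,2+\nu]$ under the same condition. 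Combining these,
\begin{align*}
y_i\la\balpha_l,\bx_{l,i}^{(0)}\ra \geq \frac{(1-\nu)\,2^{-l}\gamma}{\sqrt{2+\nu}} \geq 2^{-(l+1)}\gamma
\end{align*}
for a sufficiently small absolute constant $\nu$. A union bound over $l\in[L]$ and $i\in[n]$, together with the standard layerwise norm concentration $\rho_i = 1\pm O(\sqrt{L/m})$ used to pin down the $\rho_i^2$ factors, produces the width requirement $m\geq C\cdot 4^L L^2\gamma^{-2}\log(nL/\delta)$.

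The hardest part will be the sign-preservation issue in the inductive step. Because $\bx_{l,i}^{(0)}$ is coordinatewise non-negative, $\balpha_l$ \emph{must} have both positive and negative entries, which immediately rules out natural choices such as $\balpha_{l,j} \propto \sigma(u_j)$ (whose corresponding kernel is the arc-cosine kernel and discards the sign of $r_i$). The choice $\tilde\balpha_l = \Wb_l^{(0)\top}\balpha_{l-1}$ is precisely the one for which $\EE[u\sigma(v)]=\mathrm{Cov}(u,v)/2$ collapses to the sign-preserving identity $\EE[\la\tilde\balpha_l,\bx_{l,i}^{(0)}\ra]=r_i$, and the remaining technical check---that the factor-$\sqrt{2}$ multiplicative slack between $2^{-l}\gamma/\sqrt{2}$ and the target $2^{-(l+1)}\gamma$ suffices to absorb all normalization and concentration errors across every layer---is what fixes the exponential base $2$ in the statement.
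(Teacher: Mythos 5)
Your proposal is correct and follows essentially the same route as the paper's proof: the base case constructs $\tilde\balpha_1$ from $c(\cdot)$ as a Monte Carlo approximation of the random-feature separator, and the inductive step propagates the direction via $\Wb_l^{(0)\top}$ using the identity $\EE[(\wb^\top\ab)\sigma(\wb^\top\bb)]=\tfrac{1}{2}\EE[(\wb^\top\ab)(\wb^\top\bb)]$ (which the paper derives by the symmetry $\wb\stackrel{d}{=}-\wb$ rather than your Gaussian projection, but these are equivalent) together with Bernstein-type concentration and norm control of $\|\tilde\balpha_l\|_2$. The only cosmetic differences are that you renormalize at every layer with a constant relative error $\nu$ while the paper keeps the unnormalized $\tilde\balpha_l$ (with $\|\tilde\balpha_l\|_2\leq 2^{l-1}$) and budgets an additive error of $\gamma/(4L)$ per layer, which is where its extra $L^2$ factor in the width requirement comes from.
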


\begin{lemma}\label{lemma:randinit_gradientlowerbound_empirical}
For any $\delta>0$, under the same assumptions as Lemma~\ref{lemma:randinit_linearseparable_lastlayer}, with probability at least $1 - \delta$, the inequality
\begin{align*}
    \sum_{j=1}^{m_L} \Bigg\| \frac{1}{n} \sum_{i=1}^n [ a(\bx_i,y_i) \cdot y_i \cdot \sigma'( \wb_{L,j}^{(0)\top} \bx_{L-1,i}^{(0)} ) \cdot \bx_{L-1,i}^{(0)} ] \Bigg\|_2^2 \geq  4^{-L}/8\cdot m_L \cdot \gamma^2 \cdot \Bigg[ \frac{1}{n}\sum_{i=1}^n  a(\bx_i,y_i) \Bigg]^2
\end{align*}
holds for any function $a(\xb,y): S^{d-1}\times \{\pm 1\} \rightarrow \RR^+$.
\end{lemma}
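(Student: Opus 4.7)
}
My plan is to reduce the sum of squared vector norms to a one-dimensional quantity by projecting onto the separating direction supplied by Lemma~\ref{lemma:randinit_linearseparable_lastlayer}, and then use independence between the last-layer weights and the earlier hidden outputs to get a Bernoulli-type concentration. First I would invoke Lemma~\ref{lemma:randinit_linearseparable_lastlayer} with $l=L-1$ to obtain, on a good event of probability $\ge 1-\delta/2$, a unit vector $\balpha_{L-1}\in S^{m_{L-1}-1}$ such that $y_i\cdot\la\balpha_{L-1},\bx_{L-1,i}^{(0)}\ra \ge 2^{-L}\gamma$ for all $i\in[n]$. Writing $\vb_j := \tfrac{1}{n}\sum_{i=1}^n a(\bx_i,y_i)\,y_i\,\sigma'(\wb_{L,j}^{(0)\top}\bx_{L-1,i}^{(0)})\,\bx_{L-1,i}^{(0)}$, the trivial bound $\|\vb_j\|_2^2 \ge \la \vb_j,\balpha_{L-1}\ra^2$ reduces the problem to lower bounding $\sum_{j=1}^{m_L} \la \vb_j,\balpha_{L-1}\ra^2$.

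The key observation is that every term appearing in $\la\vb_j,\balpha_{L-1}\ra$ is nonnegative: $a(\bx_i,y_i)\ge 0$ by assumption, $\sigma'(\cdot)\in\{0,1\}\ge 0$, and the separation inequality guarantees $y_i\la\balpha_{L-1},\bx_{L-1,i}^{(0)}\ra\ge 2^{-L}\gamma>0$. Consequently
\begin{align*}
\la \vb_j,\balpha_{L-1}\ra \;\ge\; \frac{2^{-L}\gamma}{n}\sum_{i=1}^n a(\bx_i,y_i)\,\ind\big\{\wb_{L,j}^{(0)\top}\bx_{L-1,i}^{(0)} > 0\big\},
\end{align*}
and so, by Cauchy--Schwarz applied to the sum over $j$,
\begin{align*}
\sum_{j=1}^{m_L}\la \vb_j,\balpha_{L-1}\ra^2 \;\ge\; \frac{4^{-L}\gamma^2}{m_L}\Bigg(\frac{1}{n}\sum_{i=1}^n a(\bx_i,y_i) \sum_{j=1}^{m_L}\ind\big\{\wb_{L,j}^{(0)\top}\bx_{L-1,i}^{(0)} > 0\big\}\Bigg)^2.
\end{align*}
The virtue of this chain is that the arbitrary function $a$ has been pushed outside in such a way that the only remaining random quantity depending on the initialization, namely $N_i := \sum_{j=1}^{m_L}\ind\{\wb_{L,j}^{(0)\top}\bx_{L-1,i}^{(0)} > 0\}$, does not depend on $a$.

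Next I would control $N_i$ by exploiting that $\Wb_L^{(0)}$ is drawn after, and thus independently of, $\Wb_1^{(0)},\dots,\Wb_{L-1}^{(0)}$, which determine $\bx_{L-1,i}^{(0)}$. Conditional on the first $L-1$ layers (and using that Lemma~\ref{lemma:randinit_linearseparable_lastlayer} ensures $\bx_{L-1,i}^{(0)}\ne\bm 0$), the symmetry of the Gaussian distribution of each $\wb_{L,j}^{(0)}$ gives $\Pr[\wb_{L,j}^{(0)\top}\bx_{L-1,i}^{(0)}>0]=1/2$, and the indicators are independent across $j$. Hence each $N_i$ is Binomial$(m_L,1/2)$, and Hoeffding's inequality together with a union bound over $i\in[n]$ yields, on an event of probability at least $1-\delta/2$, $N_i \ge (1/2-\eta)\,m_L$ for all $i\in[n]$, as soon as $m_L \gtrsim \eta^{-2}\log(n/\delta)$, which is absorbed by the hypothesis $m\ge C\cdot 4^L L^2\gamma^{-2}\log(nL/\delta)$. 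Choosing $\eta$ small enough that $(1/2-\eta)^2 \ge 1/8$ (e.g.\ $\eta = 0.1$ works) and substituting $N_i\ge (1/2-\eta)m_L$ into the displayed inequality produces
\begin{align*}
\sum_{j=1}^{m_L}\|\vb_j\|_2^2 \;\ge\; \frac{4^{-L}\gamma^2 m_L}{8}\Bigg(\frac{1}{n}\sum_{i=1}^n a(\bx_i,y_i)\Bigg)^2,
\end{align*}
which is the claim after intersecting the two good events.

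The main obstacle to watch out for is uniformity over the infinite family of functions $a$: a naive union bound would fail. My plan avoids this entirely by isolating the $a$-dependence through the nonnegativity-plus-separation step, so that the only probabilistic statement needed is a concentration bound on $\{N_i\}_{i=1}^n$, a finite collection. A secondary technicality is ensuring $\bx_{L-1,i}^{(0)}\ne\bm 0$ (so that $\sigma'(\wb_{L,j}^{(0)\top}\bx_{L-1,i}^{(0)})$ is almost surely well defined and the conditional Bernoulli$(1/2)$ structure holds); this follows immediately from $\la\balpha_{L-1},\bx_{L-1,i}^{(0)}\ra \ge 2^{-L}\gamma>0$ provided by Lemma~\ref{lemma:randinit_linearseparable_lastlayer}.
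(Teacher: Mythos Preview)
Your proposal is correct and follows essentially the same approach as the paper: project onto the separating direction $\balpha_{L-1}$ supplied by Lemma~\ref{lemma:randinit_linearseparable_lastlayer}, collapse the sum of squares via Cauchy--Schwarz/Jensen, and control the fraction of active last-layer units by Hoeffding. The only cosmetic difference is that the paper first applies Jensen ($\sum_j\|\vb_j\|_2^2 \ge m_L\|\tfrac{1}{m_L}\sum_j\vb_j\|_2^2$) and then projects onto $\balpha_{L-1}$, whereas you project first and then apply Cauchy--Schwarz over $j$; the two orderings yield the identical scalar inequality.
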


We are now ready to prove Lemma~\ref{lemma:gradientlowerbound_randomfeature}.

\begin{proof}[Proof of Lemma~\ref{lemma:gradientlowerbound_randomfeature}]
For $i\in[n]$, denote by $\tilde{y}_i = f_{ \Wb}(\bx_{i})$ the output of the neural network with parameter matrices $\Wb_1,\ldots,\Wb_L$ and input $\bx_i$, and define $\Gb = (\gb_1,\ldots,\gb_{m_L}) \in \RR^{m_{L-1} \times m_L} $, where
\begin{align*}
    &\Gb = \frac{1}{n} \sum_{i=1}^n \ell'(y_i\tilde y_i) \cdot y_i \cdot \bx_{L-1,i}^{(0)} \vb^\top \bSigma_{L-1}^{(0)}(\bx_i),\\
    &\gb_j = \frac{1}{n} \sum_{i=1}^n [-\ell'(y_i\tilde y_i) ]  \cdot y_i \cdot \sigma'( \wb_{L,j}^{(0)\top} \bx_{L-1,i}^{(0)} ) \cdot \bx_{L-1,i}^{(0)},~j\in[m_L].
\end{align*}
Since $ 0 \leq | v_j \ell'(y_i\tilde y_i) | \leq 1$, by Lemma~\ref{lemma:randinit_gradientlowerbound_empirical} with $a(\bx_i,y_i) = -\ell'(y_i\tilde y_i)$, with probability at least $1 - \delta / 2$, 
\begin{align*}
    \| \Gb \|_F = \sqrt{\sum_{j=1}^{m_L} \| \gb_j \|_2^2 }\geq  \sqrt{ 4^{-L}/8\cdot m_{L} \cdot \gamma^2 \cdot \Bigg[ \frac{1}{n} \sum_{i=1}^n \ell'(y_i\tilde y_i) \Bigg] ^2 } = 2^{-L}/(2\sqrt{2}) \cdot \sqrt{m_{L}} \cdot \gamma \cdot \cE_S(\Wb).
\end{align*}
Since $\ell'(z) < 0$ and
\begin{align*}
    \nabla_{\Wb_L}L_S (\Wb) = \frac{1}{n} \sum_{i=1}^n \ell'(y_i\tilde y_i) \cdot y_i \cdot \xb_{L-1,i} \vb^\top \bSigma_{L-1}(\xb_i),
\end{align*}
by \ref{item:difference_xbound} and \ref{item:difference_sparsity} in Lemma~\ref{lemma:scaling_perturbation}, with probability at least $1 - \delta / 2 $ we have
\begin{align*}
    \| \nabla_{\Wb_L}L_S (\Wb) - \Gb \|_F &\leq \frac{1}{n} \sum_{i=1}^n [ - \ell'(y_i\tilde y_i) ]\cdot \big\| \big( \bx_{L-1,i} - \bx_{L-1,i}^{(0)} \big) \cdot \vb^\top \bSigma_{L-1}^{(0)}(\bx_i) \big\|_F \\
    &\quad + \frac{1}{n} \sum_{i=1}^n [ - \ell'(y_i\tilde y_i) ]\cdot \big\| \bx_{L-1,i} \cdot \vb^\top \big[ \bSigma_{L-1}(\bx_i) - \bSigma_{L-1}^{(0)}(\bx_i) \big] \big\|_F   \\
    & \leq C_1 \cE_S(\Wb) \cdot \sqrt{m} \cdot (L^2 \tau + L^{2/3} \tau^{1/3}),
\end{align*}
where $C_1$ is an absolute constant. Therefore by the assumption that $\tau \leq C 8^{-L}\cdot L^{-2} \gamma^3 $ for some small enough absolute constant $C$, we have
\begin{align*}
    \| \nabla_{\Wb_L}L_S (\Wb) \|_F \geq 2^{-(L+2)} \cdot \sqrt{m_{L}} \cdot \gamma \cdot \cE_S(\Wb).
\end{align*}
This completes the proof.
\end{proof}

\subsection{Proof of Lemma~\ref{lemma:gradientlowerbound_conjugatekernel}}
\begin{proof}[Proof of Lemma~\ref{lemma:gradientlowerbound_conjugatekernel}]
It follows by Assumption~\ref{assump:conjugatekernelassumption} and Theorem~E.1 in \citet{du2018gradientdeep} that there exists $\balpha\in \RR^{m_{L-1}}$ with $\|\balpha\|_2\leq 1$ such that $y_i\cdot\la \balpha,\xb \ra \geq \gamma$. Therefore, similar to the proof of Lemma~\ref{lemma:gradientlowerbound_randomfeature} and Lemma~\ref{lemma:randinit_gradientlowerbound_empirical}, (essentially, we can treat the $(L-1)$-th layer output as the input of a two-layer network, and apply Lemma~\ref{lemma:gradientlowerbound_randomfeature} and Lemma~\ref{lemma:randinit_gradientlowerbound_empirical} with $L = 1$) we have
\begin{align*}
    \| \nabla_{\Wb_L}L_S (\Wb) \|_F \geq C \gamma \cdot \sqrt{m_{L}} \cdot \cE_S(\Wb),
\end{align*}
where $C$ is an absolute constant. This finishes the proof.
\end{proof}

\section{Proof of Lemmas in Appendix~\ref{section:appendixA}}

\subsection{Proof of Lemma~\ref{lemma:randinit_linearseparable_lastlayer}}

\begin{proof}[Proof of Lemma~\ref{lemma:randinit_linearseparable_lastlayer}]

By Assumption~\ref{assump:nonlinearseparable}, there exists $c(\overline\ub)$ with $\| c(\cdot) \|_\infty \leq 1$ such that
\begin{align*}
    f(\xb) = \int_{\RR^d} c(\overline\ub)\sigma(\overline\ub^\top\xb) p(\overline\ub) \mathrm{d}\overline\ub
\end{align*}
satisfies $y\cdot f(\xb)\geq \gamma$ for all $(\xb,y)\in \supp(\cD)$. 
Let 
$$
\tilde\balpha_1 = (\sqrt{1 / m_1}c(\sqrt{m_1/2} \wb_1^{(0)}),\ldots,\sqrt{1 / m_1}c(\sqrt{m_1/2} \wb_{m_1}^{(0)}))^\top.
$$ 
Since $\| c(\cdot) \|_\infty \leq 1$, we have
$ \| \tilde{\balpha}_1 \|_2^2 = m_1^{-1}\cdot\sum_{j=1}^{m_1} c^2 (\sqrt{m_1/2} \wb_j ) \leq 1$. 
For any $i \in [n]$, we have
\begin{align*}
    \tilde\balpha_1^\top \bx_{1,i}^{(0)} &= \sum_{j=1}^{m_1} \sqrt{\frac{1}{m_1}}c\bigg(\sqrt{\frac{m_1}{2}} \wb_j^{(0)}\bigg) \cdot  \sqrt{\frac{2}{m_1}}\sigma\bigg(\sqrt{\frac{m_1}{2}} \wb_j^{(0)\top} \bx_i\bigg) \\
    &= \frac{\sqrt{2}}{m_1}\sum_{j=1}^{m_1} c\bigg(\sqrt{\frac{m_1}{2}} \wb_j^{(0)}\bigg) \cdot  \sigma\bigg(\sqrt{\frac{m_1}{2}} \wb_j^{(0)\top} \bx_i\bigg).
\end{align*}
Therefore $\EE(\tilde\balpha_1^\top \bx_{1,i}^{(0)} ) = \sqrt{2} f(\bx_i)$. 
Moreover, since $\| c(\cdot) \|_\infty \leq 1$, we have 
$$
\bigg\| c\bigg(\sqrt{\frac{m_1}{2}} \wb_j^{(0)}\bigg) \cdot  \sigma\bigg(\sqrt{\frac{m_1}{2}} \wb_j^{(0)\top} \bx_i\bigg) \bigg\|_{\psi_{2}} \leq C_1
$$
for some absolute constant $C_1$. Therefore by Hoeffding inequality and union bound, with probability at least $1 - \delta/4$ we have
\begin{align*}
    |\tilde\balpha_1^\top \bx_{1,i}^{(0)} - \sqrt{2} f(\bx_i)| \leq C_2 \sqrt{\frac{\log(4en /\delta)}{m_1}} \leq \gamma/2
\end{align*}
for all $i\in [n]$, where $C_2$ is an absolute constant. Set $\balpha_1 = \tilde\balpha_1 / \|\tilde\balpha_1 \|_2$. Then by $\|\tilde\balpha_1 \|_2 \leq 1$, we have
\begin{align*}
    y_i\cdot \balpha_1^\top \bx_{1,i}^{(0)} \geq  \sqrt{2}\cdot \gamma - \gamma / 2 > \gamma/2
\end{align*}
for all $i\in [n]$.  

Now we define $\tilde\balpha_l = \Wb_l^{(0)}\tilde\balpha_{l-1}$, $l=2,\ldots,L$. 
For any $l=2,\ldots,L$, by definition, we have $\| \tilde\balpha_{l} \|_2^2 =  \sum_{j=1}^{m_l} (\wb_{l,j}^{(0)\top} \tilde\balpha_{l-1})^2 $.
Therefore we have $\EE(\| \tilde\balpha_{l} \|_2^2 | \tilde\balpha_{l-1} ) = 2 \| \tilde\balpha_{l-1} \|_2^2$. Since we have $\| (\wb_{l,j}^{(0)\top} \tilde\balpha_{l-1})^2 \|_{\psi_1} = O( \| \tilde\balpha_{l-1} \|_2^2 / m )$, by Bernstein inequality and union bound, with probability at least $1 - \delta/2$, 
\begin{align*}
    \big|\| \tilde\balpha_{l} \|_2^2 - 2 \| \tilde\balpha_{l-1} \|_2^2 \big| \leq  C_3 \| \tilde\balpha_{l-1} \|_2^2\cdot \sqrt{\frac{ \log(4L/\delta) }{ m_l }} \leq 2 \| \tilde\balpha_{l-1} \|_2^2
\end{align*}
for all $l = 2,\ldots, L$, where $C_3$ is an absolute constant. Therefore since $\| \tilde\balpha_{1} \|_2 = 1$, we have $\| \tilde\balpha_{l} \|_2 \leq 2^{l-1}$
for all $l=2,\ldots,L$. Moreover, for any $i\in [n]$ and $l=2,\ldots,L$, by definition, we have $\wb_{l,j}^{(0)\top} \stackrel{d}{=} -\wb_{l,j}^{(0)\top}$, $j\in [m_l]$, and therefore 
\begin{align*}
    \EE[ \la \tilde\balpha_{l}, \bx_{l,i}^{(0)} \ra | \tilde\balpha_{l-1}]
    &= \sum_{j=1}^{m_l} \EE\Big[ \big( \wb_{l,j}^{(0)\top} \tilde\balpha_{l-1} \big) \sigma\big( \wb_{l,j}^{(0)\top} \bx_{l-1,i}^{(0)} \big) \Big| \tilde\balpha_{l-1}\Big]\\
    &= \frac{1}{2}\sum_{j=1}^{m_l} \EE\Big[ \big( \wb_{l,j}^{(0)\top} \tilde\balpha_{l-1} \big) \sigma\big( \wb_{l,j}^{(0)\top} \bx_{l-1,i}^{(0)} \big) - \big( \wb_{l,j}^{(0)\top} \tilde\balpha_{l-1} \big) \sigma\big( - \wb_{l,j}^{(0)\top} \bx_{l-1,i}^{(0)} \big) \Big| \tilde\balpha_{l-1}\Big] \\
    &= \frac{1}{2} \sum_{j=1}^{m_l} \EE\Big[ \big( \wb_{l,j}^{(0)\top} \tilde\balpha_{l-1} \big) \big( \wb_{l,j}^{(0)\top} \bx_{l-1,i}^{(0)} \big) \Big| \tilde\balpha_{l-1}\Big] \\
    &= \la \tilde\balpha_{l-1}, \bx_{l-1,i}^{(0)} \ra.
\end{align*} 
Since 
\begin{align*}
    \big\| \wb_{l,j}^{(0)\top} \tilde\balpha_{l-1} \cdot \sigma(  \wb_{l,j}^{(0)\top} \bx_{l-1,i}^{(0)}  ) \big\|_{\psi_{1}} \leq C_4 \big\| \la  \wb_{l,j}^{(0)} , \tilde\balpha_{l-1} \ra \big\|_{\psi_{2}} \cdot \big\| \la \wb_{l,j}^{(0)} , \bx_{l-1,i}^{(0)} \ra \big\|_{\psi_2}\leq C_5 \| \tilde\balpha_{l-1} \|_2 / m_l,
\end{align*}
where $C_4,C_5$ are absolute constants, by Bernstein's inequality and a union bound, with probability at least $1 - \delta/2$ we have
\begin{align*}
     \big|\la \tilde\balpha_{l}, \bx_{l,i}^{(0)} \ra - \la \tilde\balpha_{l-1}, \bx_{l-1,i}^{(0)} \ra\big| \leq C_6 \| \tilde\balpha_{l-1} \|_2\cdot  \sqrt{\frac{\log( 4nL /\delta ) }{m_l}} \leq \gamma /(4L)
\end{align*}
for all $i \in [n]$ and $l = 2,\ldots, L$, where $C_6$ is an absolute constant. Therefore we have
\begin{align*}
    y_i\cdot \la \tilde\balpha_{l}, \bx_{l,i}^{(0)} \ra \geq y_i\cdot \la \tilde\balpha_{l-1}, \bx_{l-1,i}^{(0)} \ra - \gamma /(4L)\geq \cdots \geq \gamma/2 - \gamma / 4 = \gamma / 4
\end{align*}
for all $i\in [n]$ and $l=2,\ldots,L$. 
Setting $\balpha_{l} = \tilde\balpha_{l} / \| \tilde\balpha_{l} \|_2$, we obtain
\begin{align*}
    y_i\cdot \la \balpha_{l}, \bx_{l,i}^{(0)} \ra \geq 2^{-(l-1)} \cdot \gamma / 4 \geq 2^{-(l+1)} \gamma.
\end{align*}
This completes the proof.
\end{proof}

\subsection{Proof of Lemma~\ref{lemma:randinit_gradientlowerbound_empirical}}

\begin{proof}[Proof of Lemma~\ref{lemma:randinit_gradientlowerbound_empirical}]
By Lemma~\ref{lemma:randinit_linearseparable_lastlayer}, with probability at least $1 - \delta/2$, there exists $\alpha_{L-1}\in S^{m_{L-1} - 1}$ such that $y_i\cdot \la \alpha_{L-1}, \bx_{L-1,i}^{(0)} \ra \geq 2^{-L}\gamma $ for all $ i \in [n]$. Moreover, by direct calculation we have $\EE[\sigma'( \wb_{L,j}^{(0)\top} \bx_{L-1,i}^{(0)} )| \bx_{L-1,i}^{(0)}] = 1/2 $. Therefore, by Hoeffding inequality, with probability at least $1 - \delta / 2$ we have
\begin{align}\label{eq:randinit_gradientlowerbound_empirical_eq1}
    \frac{1}{m_L} \sum_{j=1}^{m_L}\sigma'( \wb_{L,j}^{(0)\top} \bx_{L-1,i}^{(0)} ) 
    \geq \frac{1}{2} -  C_1\sqrt{\frac{\log(n/\delta)}{m_L}} \geq \frac{1}{2\sqrt{2}} > 0
\end{align}
for all $i\in [n]$, where $C_1$ is an absolute constant. 
Hence we have
\begin{align*}
    &\sum_{j=1}^{m_L} \Bigg\| \frac{1}{n} \sum_{i=1}^n [ a(\bx_i,y_i) \cdot y_i \cdot \sigma'( \wb_{L,j}^{(0)\top} \bx_{L-1,i}^{(0)} ) \cdot \bx_{L-1,i}^{(0)} ] \Bigg\|_2^2\\
    &\qquad \geq m_L  \Bigg\| \frac{1}{n} \sum_{i=1}^n \Bigg[ a(\bx_i,y_i) \cdot y_i  \cdot \bx_{L-1,i}^{(0)} \cdot \frac{1}{m_L} \sum_{j=1}^{m_L}\sigma'( \wb_{L,j}^{(0)\top} \bx_{L-1,i}^{(0)} )  \Bigg] \Bigg\|_2^2\\
    &\qquad  \geq m_L  \Bigg[ \frac{1}{n} \sum_{i=1}^n \Bigg\la a(\bx_i,y_i) \cdot y_i  \cdot \bx_{L-1,i}^{(0)} \cdot \frac{1}{m_L} \sum_{j=1}^{m_L}\sigma'( \wb_{L,j}^{(0)\top} \bx_{L-1,i}^{(0)} ), \balpha_{L-1} \Bigg\ra \Bigg]^2 \\
    &\qquad  \geq 4^{-L}\gamma^2 \cdot m_L  \Bigg[ \frac{1}{n} \sum_{i=1}^n  a(\bx_i,y_i) \cdot \frac{1}{m_L} \sum_{j=1}^{m_L}\sigma'( \wb_{L,j}^{(0)\top} \bx_{L-1,i}^{(0)} ) \Bigg]^2 \\
    &\qquad  \geq 4^{-L}/8 \cdot \gamma^2 \cdot m_L \Bigg[ \frac{1}{n} \sum_{i=1}^n  a(\bx_i,y_i) \Bigg]^2,
\end{align*}
where the first inequality follows by Jensen's inequality, the second and third inequality follows by  Lemma~\ref{lemma:randinit_linearseparable_lastlayer}, and the last inequality is by \eqref{eq:randinit_gradientlowerbound_empirical_eq1}. This completes the proof.
\end{proof}

\bibliography{ReLU}
\bibliographystyle{ims}

\end{document}